\theoremstyle{plain}
\newtheorem{theorem}{Theorem}[section]
\newtheorem{proposition}[theorem]{Proposition}
\newtheorem{lemma}[theorem]{Lemma}
\theoremstyle{definition}
\newtheorem{assumption}[theorem]{Assumption}
\theoremstyle{remark}
\newcommand{\E}{\mathbb{E}}
\renewcommand{\P}{\mathbb{P}}
\newcommand{\ind}{\mathds{1}}
\newcommand{\filt}{\mathcal{F}}
\newcommand{\cond}{\ |\ }
\newcommand{\R}{\mathbb{R}}
\newcommand{\N}{\mathbb{N}}
\newcommand{\indep}{\perp\!\!\!\perp}
\newcommand{\Lip}{\mathrm{Lip}}
\newcommand{\dif}{\mathrm{d}}
\newcommand{\ppi}{\mathrm{ppi}}
\newcommand{\imputed}{\mathrm{imputed}}
\newcommand{\batch}{\mathrm{batch}}
\DeclareMathOperator*{\argmin}{argmin}
\DeclareMathOperator*{\argmax}{argmax}
\icmltitlerunning{Prediction-Powered E-Values}
\begin{document}

\twocolumn[
\icmltitle{Prediction-Powered E-Values}



\icmlsetsymbol{equal}{*}

\begin{icmlauthorlist}
\icmlauthor{Daniel Csillag}{emap}
\icmlauthor{Claudio José Struchiner}{emap}
\icmlauthor{Guilherme Tegoni Goedert}{emap}
\end{icmlauthorlist}

\icmlaffiliation{emap}{School of Applied Mathematics, Getulio Vargas Foundation, Rio de Janeiro, Brazil}

\icmlcorrespondingauthor{Daniel Csillag}{daniel.csillag@fgv.br}

\icmlkeywords{Machine Learning, ICML, E-values, Prediction-Powered Inference, Statistical Inference, Distribution-Free Methods}

\vskip 0.3in
]



\printAffiliationsAndNotice{\icmlEqualContribution} 

\begin{abstract}
    Quality statistical inference requires a sufficient amount of data, which can be missing or hard to obtain.
    To this end, prediction-powered inference has risen as a promising methodology, but existing approaches are largely limited to Z-estimation problems such as inference of means and quantiles.
    In this paper, we apply ideas of prediction-powered inference to e-values.
    By doing so, we inherit all the usual benefits of e-values -- such as anytime-validity, post-hoc validity and versatile sequential inference -- as well as greatly expand the set of inferences achievable in a prediction-powered manner.
    In particular, we show that every inference procedure that can be framed in terms of e-values has a prediction-powered counterpart, given by our method.
    We showcase the effectiveness of our framework across a wide range of inference tasks, from simple hypothesis testing and confidence intervals to more involved procedures for change-point detection and causal discovery, which were out of reach of previous techniques.
    Our approach is modular and easily integrable into existing algorithms, making it a compelling choice for practical applications.
\end{abstract}


\section{Introduction} \label{sec:introduction}

Statistical inference is ubiquitous in many critical areas of application, such as medicine and economics.
Central to their use is the availability of moderate amounts of data to empower our inferences. However, such data can be expensive to obtain, which complicates matters.

A common strategy is to simply collect a smaller amount of data, in order to minimize costs. Unfortunately, this generally leads to more uncertain inferences.
Alternatively, there are methods that leverage auxiliary cheap-to-obtain data to `compensate' for the missing expensive data. Classical works in this direction include single imputation and multiple imputation methods \cite{little-missing-data}, but they generally lack any strong guarantee of correctness.
More recently, \cite{ppi} proposed prediction-powered inference, which allows for versatile procedures that benefit from strong correctness guarantees, notably including unbiasedness and type-I error control under very light assumptions.

At its heart, the idea of prediction-powered inference is simple: we leverage a predictive model (which can be arbitrarily complex, e.g., large neural networks) to predict the expensive data from the cheap data.
We can then use our whole dataset to perform our inference by imputing missing expensive data with predictions from our model, while leveraging the available expensive data to quantify our model's inaccuracies, debiasing our inference.

Prediction-powered inference has already inspired a large amount of literature, both methodology-wise (e.g., \cite{active-ppi,ppi-pp,ppi-cv,ppi-local}), as well as in applications such as language model evaluations \cite{ppi-llm-ranking,ppi-model-evaluation}, genome-wide association studies \cite{ppi-genomewide} and more.
However, throughout, the inference tasks considered are fairly limited;
previous works are essentially restricted to problems that can be framed in terms of Z-estimation,\footnote{A Z-estimation problem is one in which we seek to infer a parameter $\theta^\star \in \Theta$ such that $\E_Z[\psi(Z; \theta^\star)] = 0$, for some known function $\psi$.} which includes many common tasks such as inference of means, quantiles and regression coefficients, but not much more.
In this paper, we significantly expand this frontier by applying prediction-powered inference to e-values.

E-values are a recent enticing alternative to p-values.
Formally, an e-value for a null hypothesis $H_0$ is a nonnegative real random variable $E$ such that, if $H_0$ holds, then $\E[E] \leq 1$; by Markov's inequality, it is then unlikely that the e-value $E$ is high under the null, and thus a high e-value ($\gg 1$) provides evidence against the null hypothesis.
Though simple, this is a very powerful notion: e-values allow for powerful procedures under very lax assumptions (e.g., not even i.i.d., nonparametric and nonasymptotic)~\cite{evalue-nonasymptotic}, naturally handle sequential and anytime-valid inference~\cite{savi}, naturally fit into multiple testing and post-selective inference \cite{e-values-ebh,e-values-post-selection-inference} and allows for significance levels to be chosen a posteriori~\cite{koning-posthoc,grunwald-posthoc} -- properties that are notoriously challenging to obtain with the more standard p-values, if not outright impossible, especially in conjunction.
Furthermore, e-values are rather universal: any e-value can be converted to a p-value by simply taking its reciprocal, and any p-value can be converted to an e-value by a process termed calibration~\cite{e-value-calibration}, albeit at a slight loss of power.

By working atop e-values, our procedure gains a great amount of versatility.
We show that \textbf{any inference procedure that operates in terms of e-values has a prediction-powered counterpart}, given by our method.
Moreover, our procedure naturally inherits all of the usual virtues of e-values, in particular including anytime-validity and post-hoc validity.
In fact, the sequential nature of our procedure further empowers prediction-powered inference methods, allowing us to arbitrarily improve our predictive model and data collection policy over the course of the inference, whereas previous methods require us to fix it a priori, or learn it from a separate data split.

We illustrate our procedure in four case studies.
First, we use it for a simple problem of estimating prevalence of diabetes on a population from readily available survey data.
Secondly, we apply our method for a problem of anytime-valid testing of the hypothesis that a deployed model's risk does not exceed a certain safety level, for the purpose of continuous risk monitoring.
We then turn to more involved inference tasks.
On the same context of continuous risk monitoring, we apply our method for detection of change-points, in which we seek to identify points in time where some aspect of the time series has changed.
Finally, we consider how our method enables powerful procedures for causal discovery under missing (costly) data.

\paragraph{Our contributions}
\begin{enumerate}
    \item We present a new method for prediction-powered inference based on e-values. Besides being applicable to a much more general setting than the ones previously considered in the literature, it inherits all the usual benefits of e-values, including sequential inference that is valid under arbitrary optional stopping and post-hoc validity. Moreover, it allows for the underlying predictive model to be updated over the course of the inference, yielding much better data efficiency compared to prior work (which require the model to be fit on a separate data split);
    \item We show how the base method can be extended from simple hypothesis testing with e-values to more involved procedures, first considering confidence intervals/sequences and then general algorithms based on e-values. In particular, we show that simply substituting the base e-values by our prediction-powered e-values yields valid prediction-powered procedures that are statistically powerful, leading to a modular and widely applicable technique.
    \item We showcase our method in four case studies ranging from simple mean estimation and hypothesis testing to change-point detection and causal discovery. This highlights the wide applicability of our approach, and we consistently note its much improved performance compared to baselines in spite of massive (often 100x-1000x) reductions in data acquisition costs.
\end{enumerate}

\section{A General Method} \label{sec:general-method}

We will first present how we can transform a standard e-value into a prediction-powered one in the context of hypothesis testing.
This mechanism can then be leveraged to transform more complex procedures powered by e-values into prediction-powered ones; we first thoroughly instantiate this for confidence sequences, and then more generally in the context of general e-value-powered algorithms.

\subsection{Hypothesis testing}

Our goal is to test some null hypothesis $H_0$, and for this purpose have a stream of data $(X_i, Y_i)_{i=1}^\infty$. The $X_\ast$ correspond to `cheap' data that we will always have access to, while the $Y_\ast$ correspond to data that is expensive to obtain, and as such we have little access to -- but, ultimately, the hypothesis we want to test is over the distribution of the $Y_\ast$.s

Data acquisition costs aside, a sound approach to perform such a hypothesis test is to leverage an e-value $E_n$ -- i.e., a nonnegative random variable that is a function of the first $n$ data points, such that under the null $H_0$ it holds that $\E[E_n] \leq 1$.
In particular, we consider e-values of the form
\begin{equation}\label{eq:prod-evalue}
    E_n := \prod_{i=1}^n e_i (Y_i),
\end{equation}
where $(e_i)_{i=1}^\infty$ is a predictable sequence of the `components' of the e-value, i.e., each $e_i$ can be arbitrarily dependent on the samples before time $i$ (but nothing else).
We will further require that the e-value's components be predictably bounded: for all $i$, $e_i (\cdot) \in [a_i, b_i]$ for some predictable sequences $(a_i)_{i=1}^\infty$ and $(b_i)_{i=1}^\infty$, and with $a_i > 0$ for all $i$.

Most e-values in the literature are already of this form (e.g., \cite{evalue-mean,evalue-twosample-1,evalue-twosample-2,evalue-ope,evalue-testtimeadapt-yaniv}), or can factored into it. The boundedness assumption can be enforced by simple rescaling and clipping, albeit at a slight loss of power.

Should we have access to \emph{perfect} models $\mu_i^\star : \mathcal{X} \to \R$, i.e., such that $\mu_i^\star(X_i) = Y_i$ almost surely,
then we could instead only use the predictions atop the cheaper data, $\mu_i^\star(X_i)$, to construct the e-value by its components:
\begin{equation*}
    E^\imputed_n := \prod_{i=1}^n e_i (\mu_i^\star(X_i)).
\end{equation*}
However, in the much more realistic scenario that the model is not perfect, $E^\imputed_n$ will not be a valid e-value.

We can, however, debias $E^\imputed_n$ as per prediction-powered inference~\cite{ppi} and active statistical inference~\cite{active-ppi}.
First, endow the data stream with additional random variables $\xi_i \sim \mathrm{Bern}(\pi_i (X_i))$ denoting whether we have access to the more expensive data $Y_i$, where $\pi_1, \pi_2, \ldots : \mathcal{X} \to [1 - a_i/b_i, 1]$ is a predictable (i.e., possibly arbitrarily dependent on data prior to $i$, but independent of all from $i$ onwards) sequence of functions that produce the probability of data collection.

With this augmented data stream $(X_i, Y_i, \pi_i, \xi_i)_{i=1}^\infty$, we can form a new `prediction-powered' sequence of e-values, with form similar to that of the active prediction-powered estimators of \cite{active-ppi}:
\begin{gather*}\label{eq:ppi-evalue}
    e^\ppi_i := e_i (\mu_i(X_i)) + \bigl[ e_i (Y_i) - e_i (\mu_i(X_i)) \bigr] \cdot \frac{\xi_i}{\pi_i (X_i)},
    \\
    E^\ppi_n := \prod_{i=1}^n e^\ppi_i,
    \qquad\quad \left( \xi_i \sim \mathrm{Bern}(\pi_i (X_i)) \right).
\end{gather*}

This construction is motivated by the fact that, conditional on all data prior to the time point $i$, the prediction-powered e-value components $e^\ppi_i$ match the non-prediction-powered ones $e_i$ in expectation:
\begin{align*}
    &\E_i\!\left[ e_i (\mu_i(X_i)) + \bigl[ e_i (Y_i) - e_i (\mu_i(X_i)) \bigr] \cdot \frac{\xi_i}{\pi_i (X_i)} \right]
    \\ &= \E_i\!\left[ e_i (\mu_i(X_i)) \right]
    \\ &\, + \E_i\!\left[ \bigl[ e_i (Y_i) - e_i (\mu_i(X_i)) \bigr] \cdot \frac{\xi_i}{\pi_i (X_i)} \cond \xi_i = 1 \right] \P_i[\xi_i = 1]
    \\ &\, + \E_i\!\left[ \bigl[ e_i (Y_i) - e_i (\mu_i(X_i)) \bigr] \cdot \frac{\xi_i}{\pi_i (X_i)} \cond \xi_i = 0 \right] \P_i[\xi_i = 0]
    \\ &= \E_i\!\left[ e_i (\mu_i(X_i)) \right] + \E_i\!\left[ e_i (Y_i) - e_i (\mu_i(X_i)) \right]
    = \E_i\!\left[ e_i (Y_i) \right].
\end{align*}
Furthermore, the boundedness of the e-values' components and on the $\pi$ ensure that the quantity is always nonnegative. Using these facts along with a backward induction argument, one can prove:
\begin{theorem}\label{thm:hypothesis-testing-valid}
    $E^\ppi_n$ is a valid e-value for the null $H_0$. \\ Additionally:
    \begin{enumerate}[(i)]
        \item If $(E_0, E_1, \ldots)$ form a test supermartingale -- i.e., a nonnegative supermartingale with $\E[E_0] \leq 1$ under the null $H_0$ -- then so is $(E^\ppi_0, E^\ppi_1, \ldots)$;
        \item More generally, if $(E_0, E_1, \ldots)$ form an e-process -- i.e., a nonnegative stochastic process such that for all stopping times $\tau$, the null $H_0$ implies that $\E[E_\tau] \leq 1$ -- then so is $(E^\ppi_0, E^\ppi_1, \ldots)$ for all finite stopping times.
    \end{enumerate}
\end{theorem}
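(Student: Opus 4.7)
The two key building blocks are nonnegativity of each $e^\ppi_i$ and the conditional unbiasedness identity already derived in the text. Nonnegativity I would verify by cases: on $\{\xi_i = 0\}$, $e^\ppi_i = e_i(\mu_i(X_i)) \geq a_i > 0$ trivially; on $\{\xi_i = 1\}$ the worst case $e_i(Y_i) = a_i$, $e_i(\mu_i(X_i)) = b_i$ combined with the lower bound $\pi_i(X_i) \geq 1 - a_i/b_i$ gives $e^\ppi_i \geq 0$ after a short algebraic check. I would then lift the identity $\E[e^\ppi_i \cond \filt_{i-1}] = \E[e_i(Y_i) \cond \filt_{i-1}]$ to the augmented filtration $\tilde{\filt}_{i-1} := \sigma(\filt_{i-1}, \xi_1, \ldots, \xi_{i-1})$; this is immediate from the exogeneity of the $\xi$'s, which are generated independently of the $(X,Y)$-data conditional on $\pi_i(X_i)$.

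The main validity claim and part (i) then follow directly from the multiplicative structure. Writing $E^\ppi_n = e^\ppi_n \cdot E^\ppi_{n-1}$ with $E^\ppi_{n-1}$ nonnegative and $\tilde{\filt}_{n-1}$-measurable, pulling it out yields
\begin{equation*}
    \E[E^\ppi_n \cond \tilde{\filt}_{n-1}] = E^\ppi_{n-1} \cdot \E[e_n(Y_n) \cond \filt_{n-1}].
\end{equation*}
The test supermartingale hypothesis on $(E_n) = (\prod_i e_i(Y_i))$ is precisely that $\E[e_n(Y_n) \cond \filt_{n-1}] \leq 1$ under $H_0$, which combined with $E^\ppi_0 = 1$ makes $(E^\ppi_n)$ a test supermartingale on $\tilde{\filt}$ -- this is part (i). Iterating the inequality (or, equivalently, taking $\tau \equiv n$ in the optional stopping theorem) gives the unconditional validity $\E[E^\ppi_n] \leq 1$.

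Part (ii) is where I expect the main obstacle to lie: an e-process need not be a supermartingale, and the stopping times in question are adapted to $\tilde{\filt}$ and may thus depend on the exogenous $\xi$'s. Following the backward-induction hint in the text, I would fix a finite stopping time $\tau$ (first bounded by some $N$, and then extended by a standard monotone convergence argument) and work backwards from $N$ down to $0$, at each step using the conditional identity to swap a factor $e^\ppi_n$ for $e_n(Y_n)$ in expectation while marginalizing the exogenous $\xi_n$. The output should be a comparison of the form $\E[E^\ppi_\tau] \leq \E[E_\sigma]$ for a stopping rule $\sigma$ on $\filt$ induced by $\tau$; the e-process hypothesis on $(E_n)$ then caps the right-hand side by $1$. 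The delicate technical step is ensuring that $\sigma$ -- which may be \emph{randomized}, as $\tau$ can depend on $\xi$ -- can still be handled within the e-process framework; this should follow from the standard equivalence between randomized $\filt$-stopping rules and ordinary stopping times on a filtration enlarged by independent randomization, which is exactly the role played by the $\xi_i$'s in the augmented filtration.
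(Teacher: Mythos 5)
Your nonnegativity check and your treatment of part (i) are sound and essentially match the paper's argument (the paper likewise splits on the two values of $\xi_i$, and uses the conditional identity $\E[e^\ppi_{n+1} \cond \filt_n] = \E[e_{n+1}(Y_{n+1}) \cond \filt_n]$ to get the supermartingale property). The genuine gap is in how you obtain the headline claim. You deduce $\E[E^\ppi_n] \leq 1$ by first upgrading $(E^\ppi_n)$ to a test supermartingale, which needs the one-step hypothesis $\E[e_n(Y_n) \cond \filt_{n-1}] \leq 1$; but the theorem asserts e-value validity whenever the product $E_n = \prod_i e_i(Y_i)$ merely satisfies $\E[E_n] \leq 1$, and that does not imply each predictable factor is conditionally bounded by one in expectation. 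The paper covers this general case with a backward-induction lemma establishing $\E[\prod_{i=k}^n e^\ppi_i \cond A, \filt_k] = \E[\prod_{i=k}^n e_i(Y_i) \cond A, \filt_k]$ for any event $A$, which at $k=1$ gives $\E[E^\ppi_n] = \E[E_n] \leq 1$ with no supermartingale structure whatsoever. Your own backward-induction plan (which you reserve for part (ii)) would supply exactly this once specialized to the deterministic time $\tau \equiv n$, but as written your proof of the main claim only goes through under the stronger hypothesis of part (i).

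For part (ii) you correctly locate the difficulty, but you stop at a plan: the reduction of a stopping time on the augmented filtration to a randomized stopping rule $\sigma$ on the base filtration, and the claim that the e-process property survives such randomization, are precisely the steps that would need proof, and they are not carried out. The paper takes a different and shorter route: it proves the swap lemma conditionally on an arbitrary event $A$, applies it with $A = \{\tau = n\}$ to get $\E[E^\ppi_n \cond \tau = n] = \E[E_n \cond \tau = n]$, and then averages over $n$ to conclude $\E[E^\ppi_\tau] = \E[E_\tau] \leq 1$, invoking the e-process hypothesis for $\tau$ itself rather than for an induced rule on a smaller filtration. (Your worry about $\xi$-dependent stopping is legitimate -- the paper implicitly reads the e-process property relative to the augmented filtration containing the $\xi$'s, which is what makes its last step immediate -- but in your write-up the corresponding step remains an unproven ``should follow,'' so part (ii) is incomplete as it stands.)
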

Besides having valid e-values -- which assures us of type-I error control -- one should check whether they are efficient/powerful. We can check that, under mild assumptions, our e-process has good power in terms of the expected growth rate \cite{kelly} as long as the models $\mu_i$ match the true data $Y_i$ sufficiently well:
\begin{theorem}\label{thm:hypothesis-testing-power}
    Suppose that the $e_i(\cdot)$ are each $L_i$-Lipschitz, and that $\pi_i(X_i) \geq 1 - a_i/b_i + \epsilon_i$ for some $\epsilon_i > 0$, for all $i$. Then there exists some constant $c > 0$ independent of $n$ such that
    \begin{align*}
        \E\left[ \frac{1}{n} \log E^\ppi_n \right]
        \geq \E\left[ \frac{1}{n} \log E_n \right] - \frac{c}{n} \sum_{i=1}^n \E[ W(\mu_i(X_i) \Vert Y_i) ],
    \end{align*}
    where $W(\mu_i(X_i) \Vert Y_i)$ is the Wasserstein distance between $\mu_i(X_i)$ and $Y_i$, conditional on all else prior.
\end{theorem}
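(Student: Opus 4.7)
My approach is to compare each factor $\log e^\ppi_i$ to $\log e_i(Y_i)$ using the $\xi_i$-conditional unbiasedness established inside the proof of Theorem~\ref{thm:hypothesis-testing-valid}, together with a reverse Jensen/Taylor bound for the concave function $\log$.

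I would first use the strict hypothesis $\pi_i(X_i) \geq 1 - a_i/b_i + \epsilon_i$ to produce a uniform positive lower bound $\alpha_i > 0$ on $e^\ppi_i$. When $\xi_i = 0$ we have $e^\ppi_i = e_i(\mu_i(X_i)) \geq a_i$; when $\xi_i = 1$, an elementary calculation using $e_i \in [a_i, b_i]$ yields $e^\ppi_i \geq b_i - (b_i - a_i)/\pi_i(X_i) \geq \epsilon_i b_i^2 / (b_i - a_i + \epsilon_i b_i) > 0$. This is strictly stronger than what Theorem~\ref{thm:hypothesis-testing-valid} needs (mere nonnegativity), and is exactly what makes $\log$ well-defined and Lipschitz on the range of $e^\ppi_i$, with Lipschitz constant $1/\alpha_i$.

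The core of the argument is a reverse Jensen inequality. By construction $\E_i[e^\ppi_i \mid X_i, Y_i] = e_i(Y_i)$, so a second-order Taylor expansion of $\log$ with Lagrange remainder (exploiting $|(\log)''(x)| = 1/x^2 \leq 1/\alpha_i^2$) gives
\begin{equation*}
    \E_i[\log e^\ppi_i \mid X_i, Y_i] \;\geq\; \log e_i(Y_i) \;-\; \frac{\mathrm{Var}_i(e^\ppi_i \mid X_i, Y_i)}{2 \alpha_i^2}.
\end{equation*}
Since $\xi_i$ is two-valued conditional on $(X_i, Y_i, \text{past})$, a direct computation yields the conditional variance $(e_i(Y_i) - e_i(\mu_i(X_i)))^2 \cdot (1 - \pi_i(X_i))/\pi_i(X_i)$, and the hypothesis on $\pi_i$ bounds the factor $(1-\pi_i)/\pi_i$ above by an explicit constant depending only on $a_i, b_i, \epsilon_i$.

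To convert this squared pointwise error into the first-moment Wasserstein distance in the theorem statement, I would combine the Lipschitz bound $|e_i(Y_i) - e_i(\mu_i(X_i))| \leq L_i |Y_i - \mu_i(X_i)|$ with the range bound $|e_i(Y_i) - e_i(\mu_i(X_i))| \leq b_i - a_i$ to obtain $(e_i(Y_i) - e_i(\mu_i(X_i)))^2 \leq L_i (b_i - a_i)\, |Y_i - \mu_i(X_i)|$. Since $\mu_i$ is predictable, conditioning on $X_i$ and the past makes $\mu_i(X_i)$ deterministic, so $\E[|Y_i - \mu_i(X_i)| \mid X_i, \text{past}] = W_1(\delta_{\mu_i(X_i)}, \mathrm{law}(Y_i \mid X_i, \text{past})) = W(\mu_i(X_i) \Vert Y_i)$. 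Summing over $i$, taking outer expectation, dividing by $n$, and absorbing all per-step prefactors $L_i(b_i - a_i)(1-\pi_i)/(2 \alpha_i^2 \pi_i)$ into a single constant $c$ (uniform in $n$) delivers the claim. The main obstacle is precisely this re-linearization: because $e^\ppi_i - e_i(Y_i)$ has zero $\xi_i$-conditional mean, any first-order bound is useless and one is forced into the variance, producing an ostensibly quadratic dependence on $|Y_i - \mu_i(X_i)|$; the range-bound trick is what trades one power of Lipschitzness for a linear dependence matching the $W_1$-type distance, rather than leaving an $L^2$ / $W_2^2$-type quantity.
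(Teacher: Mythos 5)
Your proof is correct, but it takes a genuinely different route from the paper's. The paper fixes $Y_i$, $\xi_i$ and $\pi_i(X_i)$ and views $\log e^\ppi_i$ as a Lipschitz function of $\mu_i(X_i)$ alone (Lipschitzness of $\log$ on a domain bounded away from zero, courtesy of the margin $\epsilon_i$, composed with the $L_i$-Lipschitzness of $e_i$); it then invokes the Kantorovich--Rubinstein/IPM form of $W_1$ to swap $\mu_i(X_i)$ for $Y_i$ inside the expectation, which collapses $\log e^\ppi_i$ to $\log e_i(Y_i)$ at a cost of $c\,W(\mu_i(X_i)\Vert Y_i)$ per step -- a purely first-order argument that never uses the debiasing identity. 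You instead exploit exactly that identity, $\E[e^\ppi_i \mid X_i, Y_i, \filt_i] = e_i(Y_i)$, and run a reverse-Jensen/second-order Taylor bound for $\log$, paying the conditional variance $(e_i(Y_i)-e_i(\mu_i(X_i)))^2(1-\pi_i(X_i))/\pi_i(X_i)$ divided by $2\alpha_i^2$, and then trade one power of the squared error for the range bound $b_i-a_i$ to land back on a linear $|Y_i-\mu_i(X_i)|$ term; your lower bound $\alpha_i$ on $e^\ppi_i$ from the $\epsilon_i$-margin and your variance computation are both correct. What each buys: the paper's argument is shorter and phrases the penalty directly as a Wasserstein change of measure, while yours makes the statistical mechanism transparent -- the penalty is a variance of the inverse-propensity correction, and your constant correctly vanishes as $\pi_i(X_i)\to 1$, which the paper's $\max\{(1-\pi_i)/\pi_i,1\}$ factor does not. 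Two harmless caveats: your $W$ is the distance to the point mass $\delta_{\mu_i(X_i)}$ after conditioning on $X_i$ (i.e.\ the conditional $L^1$ error), whereas the theorem's phrasing is ambiguous; but the paper's own proof, by conditioning on $Y_i$, likewise produces a point-mass version, so after taking outer expectations the two quantities coincide and the strength is the same. And like the paper, you must implicitly assume the per-step prefactors $L_i(b_i-a_i)(1-\pi_i)/(2\alpha_i^2\pi_i)$ are bounded uniformly in $i$ to extract a single constant $c$ independent of $n$ -- the paper's proof has exactly the same unstated uniformity requirement, so this is not a gap specific to your argument.
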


More general and precise statements are also possible, but less compact; see Theorems~\ref{thm:suppl-hypothesis-testing-power-1} in the appendix.

The sequential nature of the prediction-powered e-values -- which holds regardless of whether the original e-values were of sequential nature -- allows for an extremely versatile procedure.
For instance, in contrast to most existing prediction-powered inference procedures, we are able to update both our underlying prediction model and our data collection rule over the course of our inference process, with no restrictions other than not using future information and having to satisfy the boundedness assumptions.

The resulting algorithm for hypothesis testing is remarkably simple to implement, given its generality. The pseudocode can be found in Algorithm~\ref{alg:protocol}.

\begin{algorithm}[tb]
    \caption{Prediction-Powered E-Values}
    \label{alg:protocol}
    \begin{algorithmic}
        \STATE {\bfseries Input:} base e-value components $(e_1 (\cdot), e_2 (\cdot), \ldots)$
        \STATE {\bfseries Output:} prediction-powered e-values $(E^\ppi_0, E^\ppi_1, \ldots)$
        \STATE $E^\ppi_0 \gets 1$
        \STATE Initialize $\mu : \mathcal{X} \to \mathcal{Y}$ and $\pi : \mathcal{X} \to [1 - a_1/b_1, 1]$
        \FOR{each $i = 1, 2, \ldots$}
            \STATE Get `cheap' data $X_i$
            \STATE Sample $\xi_i \sim \mathrm{Bern}(\pi(X_i))$
            \IF{$\xi_i = 1$}
                \STATE Collect `expensive' data $Y_i$
                \STATE $E^\ppi_i \gets E^\ppi_{i-1} \cdot \frac{e_i(Y_i) - (1-\pi(X_i)) e_i(\mu(X_i))}{\pi(X_i)}$
            \ELSE
                \STATE $E^\ppi_i \gets E^\ppi_{i-1} \cdot e_i(\mu(X_i))$
            \ENDIF
            \STATE Optionally update $\pi$ and $\mu$ 
        \ENDFOR
    \end{algorithmic}
\end{algorithm}

\subsection{From hypothesis testing to confidence intervals}

With prediction-powered e-values in hand, we can easily produce prediction-powered confidence intervals/sequences by considering a family of e-values indexed by the parameter in question.

Suppose we want to produce a confidence interval/sequence for a parameter $\theta^\star \in \Theta$ of the data generating process, and consider the family of nulls $H_0^{(\theta)} : \theta^\star = \theta$, indexed by $\theta$.
For each such null, we can construct a corresponding prediction-powered e-value $E^{\ppi-(\theta)}_n$ and then consider the set
\[ C^{\ppi-(\alpha)}_n := \left\{ \theta \in \Theta : E^{\ppi-(\theta)} < 1/\alpha \right\}. \]
By the standard duality between hypothesis tests and confidence sets, it then holds that:
\begin{proposition} \label{thm:confidence-interval-valid}
    $C^{\ppi-(\alpha)}_n$ is a valid confidence interval -- i.e., $\P[\theta^\star \in C^{\ppi-(\alpha)}_n] \geq 1 - \alpha$.
    Moreover:
    \begin{enumerate}[(i)]
        \item If the underlying e-values form a nonnegative supermartingale, then the prediction-powered intervals are anytime-valid (also known as confidence sequences): $\P[\forall n \in \N,\ \theta^\star \in C^{\ppi-(\alpha)}_n] \geq 1 - \alpha$;
        \item More generally, if the underlying e-values form e-processes, then the prediction-powered intervals are valid at arbitrary stopping times: $\P[\theta^\star \in C^{\ppi-(\alpha)}_\tau] \geq 1 - \alpha$ for any stopping time $\tau$.
    \end{enumerate}
\end{proposition}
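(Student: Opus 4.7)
The plan is to exploit the standard duality between e-value-based tests and confidence sets: a parameter $\theta$ is excluded from the confidence set exactly when the corresponding prediction-powered e-value rejects the null $H_0^{(\theta)}$, so validity of the confidence set follows from type-I error control of the test at the true parameter. The key observation is that $\theta^\star$ satisfies $H_0^{(\theta^\star)}$ by construction, so Theorem~\ref{thm:hypothesis-testing-valid} (and its two refinements) applies directly to $E^{\ppi-(\theta^\star)}_n$.

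For the base claim, I would first note that $\{\theta^\star \notin C^{\ppi-(\alpha)}_n\} = \{E^{\ppi-(\theta^\star)}_n \geq 1/\alpha\}$ by definition. Since $\theta^\star$ is the true parameter, the null $H_0^{(\theta^\star)}$ holds, and so by Theorem~\ref{thm:hypothesis-testing-valid} the quantity $E^{\ppi-(\theta^\star)}_n$ is a valid e-value, meaning $\E[E^{\ppi-(\theta^\star)}_n] \leq 1$. Applying Markov's inequality gives $\P[E^{\ppi-(\theta^\star)}_n \geq 1/\alpha] \leq \alpha$, yielding $\P[\theta^\star \in C^{\ppi-(\alpha)}_n] \geq 1 - \alpha$.

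For part (i), the supermartingale case, Theorem~\ref{thm:hypothesis-testing-valid}(i) upgrades $(E^{\ppi-(\theta^\star)}_n)_{n \geq 0}$ to a nonnegative test supermartingale under $H_0^{(\theta^\star)}$. Ville's inequality then yields $\P[\sup_{n \in \N} E^{\ppi-(\theta^\star)}_n \geq 1/\alpha] \leq \alpha$, and rewriting the complementary event gives the anytime-valid statement $\P[\forall n \in \N,\ \theta^\star \in C^{\ppi-(\alpha)}_n] \geq 1 - \alpha$. For part (ii), Theorem~\ref{thm:hypothesis-testing-valid}(ii) tells us that $(E^{\ppi-(\theta^\star)}_n)_{n \geq 0}$ is an e-process under $H_0^{(\theta^\star)}$, so by definition $\E[E^{\ppi-(\theta^\star)}_\tau] \leq 1$ for any finite stopping time $\tau$; one more application of Markov's inequality gives the desired result.

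There is no real obstacle here beyond carefully invoking the three parts of Theorem~\ref{thm:hypothesis-testing-valid} in turn. The only subtle point worth flagging is measurability of $C^{\ppi-(\alpha)}_n$ as a random set (so that $\{\theta^\star \in C^{\ppi-(\alpha)}_n\}$ is a genuine event), which follows from continuity of $\theta \mapsto E^{\ppi-(\theta)}_n$ in the natural parameterizations used, or more generally from separability arguments; this is a standard concern for confidence sequences and I would simply note it rather than belabor it.
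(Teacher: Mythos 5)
Your proposal is correct and follows essentially the same route as the paper's proof: apply Theorem~\ref{thm:hypothesis-testing-valid} at the true parameter $\theta^\star$ (where $H_0^{(\theta^\star)}$ holds), then Markov's inequality for the fixed-$n$ and stopping-time cases and Ville's inequality for the supermartingale case. The measurability remark is a reasonable extra flag but does not change the argument.
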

Again, we are also interested in how efficient these confidence sequences are. Just like before, as long as our predictive models are good, we get more concentrated intervals, as measured by the area under the log-p-landscape:
\begin{proposition} \label{thm:confidence-interval-power}
    Under the assumptions of Theorem~\ref{thm:hypothesis-testing-power},
    let $\nu$ be a measure over the parameter space $\Theta$. Then there exists some $c$ for which
    \begin{align*}
        \E\left[\int \frac{1}{n} \log \frac{1}{E^{\ppi-(\theta)}_n} \dif\nu(\theta)\right]
        &\leq \E\left[\int \frac{1}{n} \log \frac{1}{E^{(\theta)}_n} \dif\nu(\theta)\right]
        \\ &+ \frac{\nu(\Theta)c}{n} \sum_{i=1}^n W(\mu_i(X_i) \Vert Y_i).
    \end{align*}
\end{proposition}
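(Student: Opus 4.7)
The plan is to apply Theorem~\ref{thm:hypothesis-testing-power} pointwise for each $\theta \in \Theta$, and then integrate the resulting bound against $\nu$, invoking Fubini--Tonelli to exchange the expectation $\E$ with the $\theta$-integral. The key enabler is that the Wasserstein term $W(\mu_i(X_i) \Vert Y_i)$ that quantifies model error does \emph{not} depend on $\theta$ at all -- it is determined solely by the predictive model $\mu_i$ and the (predictable) data -- so it factors cleanly out of the $\nu$-integral.

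More concretely, I would first fix an arbitrary $\theta \in \Theta$ and apply Theorem~\ref{thm:hypothesis-testing-power} to the e-process $E^{(\theta)}_n$ testing $H_0^{(\theta)} : \theta^\star = \theta$. Provided the Lipschitz constants $L_i$, bounds $a_i, b_i$, and the slacks $\epsilon_i$ associated with the family $\{e_i^{(\theta)}(\cdot)\}_\theta$ are uniformly controlled in $\theta$ (so that the resulting constant from Theorem~\ref{thm:hypothesis-testing-power} is bounded above by a $\theta$-free constant $c$), we obtain
\begin{equation*}
    \E\!\left[ \tfrac{1}{n} \log E^{\ppi-(\theta)}_n \right] \geq \E\!\left[ \tfrac{1}{n} \log E^{(\theta)}_n \right] - \tfrac{c}{n} \sum_{i=1}^n \E[W(\mu_i(X_i) \Vert Y_i)]
\end{equation*}
uniformly over $\theta \in \Theta$. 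Negating (which flips the inequality and turns $\log E$ into $\log(1/E)$) and integrating against $\dif\nu(\theta)$, the $\theta$-independent error term pulls out with a factor of $\nu(\Theta)$, while a standard Fubini--Tonelli argument -- valid because both log-e-values are integrable under the predictable boundedness of the components, and $(\omega, \theta) \mapsto \log E^{\ppi-(\theta)}_n(\omega)$ is jointly measurable under mild measurability of the parametrization -- lets us swap $\E$ past $\int \cdot \, \dif\nu(\theta)$. This yields the claimed inequality.

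The main obstacle is establishing that a single constant $c$ works across all of $\Theta$: Theorem~\ref{thm:hypothesis-testing-power}'s constant is built from the Lipschitz, bound, and slack parameters of the underlying e-value components, and these may \emph{a priori} vary with $\theta$. For typical parametric families (e.g.\ Bernoulli-style e-values or conjugate-mixture supermartingales), these constants are uniformly controlled on any bounded $\Theta$, so taking $c := \sup_{\theta \in \Theta} c_\theta$ is harmless; this is the only nontrivial regularity assumption beyond those of Theorem~\ref{thm:hypothesis-testing-power}. Everything else -- the negation, the integration, and the Fubini swap -- is routine.
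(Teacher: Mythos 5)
Your proposal is correct and follows essentially the same route as the paper's proof: apply Theorem~\ref{thm:hypothesis-testing-power} for each fixed $\theta$, negate, integrate against $\nu$, pull out the $\theta$-free Wasserstein term with a factor $\nu(\Theta)$, and swap $\E$ with the $\nu$-integral via Fubini. Your additional remark that the constant must be uniform in $\theta$ (taking $c = \sup_\theta c_\theta$) is a legitimate point of care that the paper's proof leaves implicit, but it does not change the argument.
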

These results may be mapped to the actual measure of the confidence interval, but this is nontrivial; see the Appendix.

\subsection{General e-value-powered algorithms}

Beyond simple hypothesis testing and confidence sequences, e-values can also be used as components of more elaborate inference procedures, for example in causal discovery (e.g. \cite{peters-book}), change point detection~\cite{evalue-change-point-1,evalue-change-point-2,evalue-change-point-3} and test-time adaptation~\cite{evalue-testtimeadapt-yaniv}.
Our prediction-powered e-values can also be seamlessly integrated into such procedures.

Consider that we have a family of e-values $E^{(\gamma)}_n$ for respective nulls $H^{(\gamma)}_0$, indexed by $\gamma \in \Gamma$, and our overall algorithm is of the form $\mathcal{A}((E^{(\gamma)}_n)_{\gamma \in \Gamma})$.
Moreover, our algorithm comes endowed with some `validity' property, and is such that this validity depends only on the inputted e-values being valid:
\begin{assumption}\label{assump:algo-validity}
    If $E^{(\gamma)}_n$ is a valid e-value for the null $H^{(\gamma)}_0$ for every $\gamma \in \Gamma$, then $\mathcal{A}((E^{(\gamma)}_n)_{\gamma \in \Gamma})$ is \emph{valid}.
\end{assumption}
It is then easy to show that by simply replacing the input e-values by their prediction-powered counterparts, the validity property is maintained:
\begin{proposition} \label{thm:general-algo-valid}
    Under Assumption~\ref{assump:algo-validity}, it holds that
    $\mathcal{A}((E^{\ppi-(\gamma)}_n)_{\gamma \in \Gamma})$ is also \emph{valid}.
    If the underlying e-values are e-processes, then it further holds that $\mathcal{A}((E^{\ppi-(\gamma)}_\tau)_{\gamma \in \Gamma})$ is \emph{valid} for any finite stopping time $\tau$.
\end{proposition}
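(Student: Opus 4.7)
The plan is to reduce the proposition to a direct invocation of Assumption~\ref{assump:algo-validity}, with essentially all of the analytic work already carried out in Theorem~\ref{thm:hypothesis-testing-valid}; what remains is bookkeeping to verify that the prediction-powered e-values supplied to $\mathcal{A}$ satisfy the input requirement of the assumption.

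For the first claim, I would fix an arbitrary $\gamma \in \Gamma$ and recall that by hypothesis $E^{(\gamma)}_n$ is a valid e-value for the null $H^{(\gamma)}_0$. Applying Theorem~\ref{thm:hypothesis-testing-valid} to this sequence yields that the prediction-powered counterpart $E^{\ppi-(\gamma)}_n$ is itself a valid e-value for $H^{(\gamma)}_0$. Since this conclusion holds uniformly over $\gamma \in \Gamma$, the whole family $(E^{\ppi-(\gamma)}_n)_{\gamma \in \Gamma}$ is a family of valid e-values for the nulls $(H^{(\gamma)}_0)_{\gamma \in \Gamma}$, which is exactly the input hypothesis of Assumption~\ref{assump:algo-validity}. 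Invoking that assumption then gives the validity of $\mathcal{A}((E^{\ppi-(\gamma)}_n)_{\gamma \in \Gamma})$.

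For the stopping-time claim, I would instead appeal to part~(ii) of Theorem~\ref{thm:hypothesis-testing-valid}: whenever $(E^{(\gamma)}_n)$ is an e-process, so is $(E^{\ppi-(\gamma)}_n)$ for all finite stopping times. By the very definition of an e-process, this says that for any finite stopping time $\tau$ and any $\gamma \in \Gamma$, the stopped random variable $E^{\ppi-(\gamma)}_\tau$ is a valid e-value for $H^{(\gamma)}_0$ in its own right. Consequently, the stopped family $(E^{\ppi-(\gamma)}_\tau)_{\gamma \in \Gamma}$ again meets the input hypothesis of Assumption~\ref{assump:algo-validity}, and a second application of the assumption delivers the validity of $\mathcal{A}((E^{\ppi-(\gamma)}_\tau)_{\gamma \in \Gamma})$.

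The only point that requires any care — and arguably the ``main obstacle'' — is purely conceptual: Assumption~\ref{assump:algo-validity} is phrased for e-values indexed by a deterministic time, so for the stopping-time statement one must explicitly recognise the stopped family as a family of valid e-values before feeding it to $\mathcal{A}$. Because Theorem~\ref{thm:hypothesis-testing-valid}(ii) supplies precisely this property at any finite stopping time, no genuinely new probabilistic argument is needed; the proof is a clean two-step composition of Theorem~\ref{thm:hypothesis-testing-valid} with Assumption~\ref{assump:algo-validity}.
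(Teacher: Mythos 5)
Your proof is correct and follows essentially the same route as the paper: apply Theorem~\ref{thm:hypothesis-testing-valid} (and its e-process part) to each $E^{(\gamma)}_n$ individually, so that the prediction-powered family -- at a fixed time $n$ or at a finite stopping time $\tau$ -- satisfies the input hypothesis of Assumption~\ref{assump:algo-validity}, and then invoke that assumption. No gaps; your remark about recognising the stopped family as valid e-values is exactly how the paper handles the stopping-time case.
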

It is still also of interest to consider some notion of `power' or `efficiency' of the resulting prediction-powered procedure. However, such an analysis needs to consider more of the particular algorithm in principle, and so should be done on a case-by-case basis.
Similarly, the appropriate notion of anytime-validity (which would be implied by the underlying e-values forming test supermartingales) depends on the particular definition of validity for the algorithm in question and so should be considered in a case-by-case basis. Nevertheless, the case of an e-process still holds generally.

\subsection{The Asymptotic Case}

Though typically considered in the context of nonasymptotic statistics, e-values also have asymptotic analogues \cite{asymp-evalue,evalue-book}. We focus on the main text only on nonasymptotic e-values, but our ideas directly map to the asymptotic setting just as well; see Appendix~\ref{suppl:asymptotic}.

\section{Experiments and Case Studies} \label{sec:experiments}

In this section we present four case studies where we use our method, highlighting the modifications made to the base methods in the process of prediction-empowerment.

\subsection{Estimation of a Mean: Prevalence of Diabetes from Survey Data} \label{sec:experiment-1}

In this first case study we seek to estimate the prevalence of diabetes on a cohort, upon which we work atop the dataset of \cite{diabetes-dataset}. This estimation is key to the scaling of resources in health systems, as this medical condition can be very common and very costly to treat in many populations.

Actually assessing the presence of diabetes can be somewhat costly, requiring through analysis of individual medical records.
On the other hand, we have readily available data in the form of short survey responses, consisting of simple questions such as ``do you have high blood pressure?'', ``do you have high cholesterol?'', ``have you smoked at least 100 cigarettes in your entire life?'', and so on (see Appendix~\ref{suppl:datasets} for the full list).
Considering that these questions capture health indicators that are fairly predictive of diabetes, it is appealing to leverage them in a prediction-powered manner.

More formally, we have a data stream $(X_i, Y_i)_{i=1}^\infty$ where the $X_i$ correspond to the responses to our survey questions, and the $Y_i$ correspond to a binary indicator of whether the individual is diabetic.
For the sake of evaluation, our dataset includes all $Y_i$, but in a real-world setting it would be expected that they would be largely missing; we will simulate this missingness.
Our goal is to infer the mean
\[ \text{prevalence of diabetes} = \E[\ind[Y_i = \text{diabetic}]]. \]
This is the mean of a random variable bounded in $[0, 1]$, and so we can use the e-value-based method for inference of bounded means of \cite{evalue-mean}. Our confidence interval/sequence is thus given by the set
\[ C^{(\alpha)}_n = \left\{ \theta \in [0,1] : E^{(\theta)}_n < 1/\alpha \right\}, \]
for
\begin{align*}
    E^{(\theta)}_n = \prod_{i=1}^n \biggl( 1 + \lambda_i \Bigl( \ind[Y_i = \text{diabetic}] - m \Bigr) \biggr),
\end{align*}
where $(\lambda_i)_{i=1}^\infty$ is a predictable sequence of bets bounded in $(-\frac{1}{1-\theta}, \frac{1}{\theta})$.
In particular, each $E^{(\theta)}_n$ is a test supermartingale -- and thus a sequence of e-values -- for a corresponding null $H^{(\theta)}_0 : \text{prevalence of diabetes} = \theta$ \cite{evalue-mean}.

These e-values are already in our required form of Equation~\eqref{eq:prod-evalue}, but additional care needs to be taken with regards to the bounds of the e-values' components. As-is, the components are bounded just in $[0, 1 + \max\{ \theta/(1-\theta), (1-\theta)/\theta \}]$.
This means that we would require the data collection probabilities $\pi_i(X_i)$ to be bounded in $[1, 1]$ -- i.e., we would always need to collect data; this is clearly insufficient for our purposes.

\begin{figure}
    \includegraphics[width=\columnwidth]{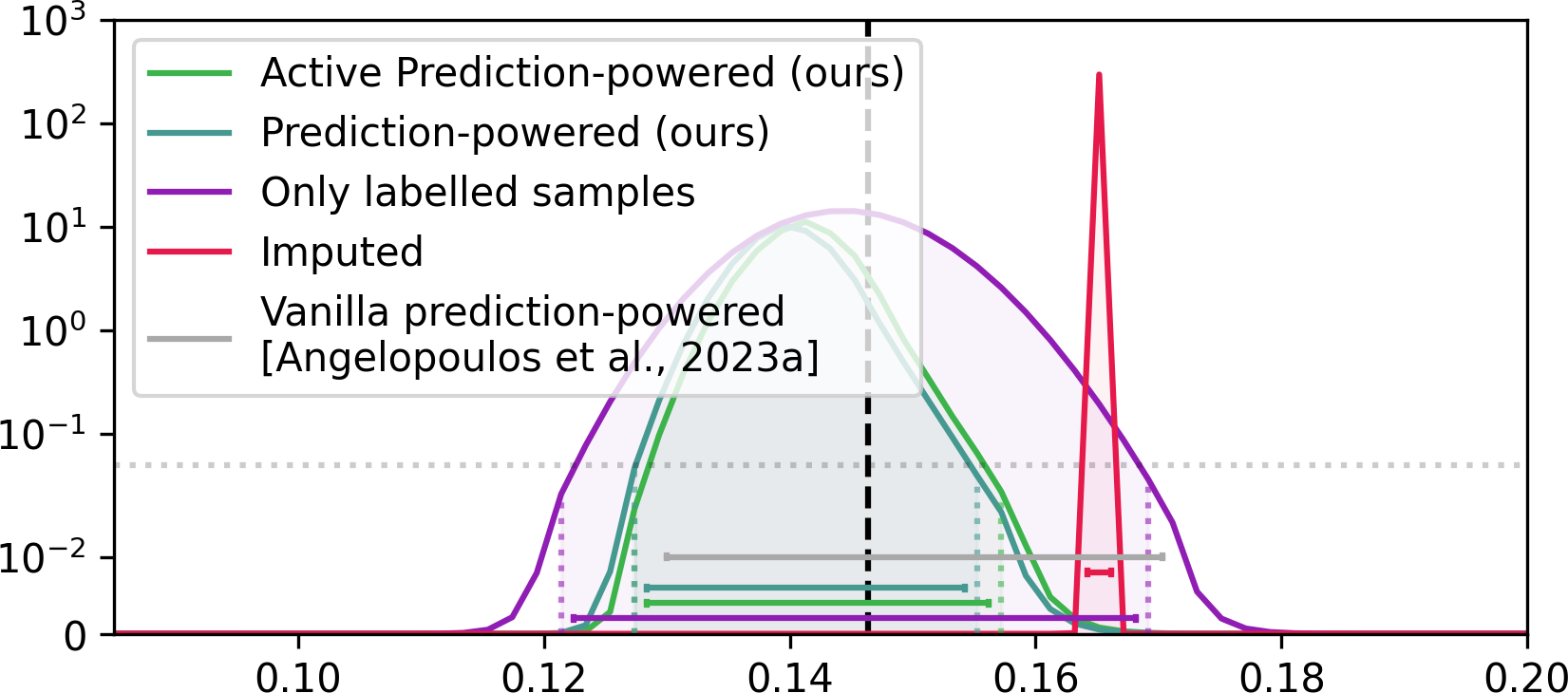}
    \vspace{-0.5cm}
    \caption{\textbf{Prediction-powered confidence sequences.} The plot shows the p-landscape (i.e., parameter on the x-axis, reciprocal of the e-value on the y-axis) for the confidence sequence generated by our method (green), along with those for inference using only labelled samples (purple) and by using an imputation approach. The 95\% confidence intervals for each p-landscape (i.e., region where the p-landscape is above 0.05) is shaded. Our method provides the tightest valid intervals -- using only the labelled samples or vanilla PPI \cite{ppi} yields weaker inferences, and using imputation fails to cover the true mean.} \label{fig:confseq}
\end{figure}

Fortunately, we have a direct way of controlling these bounds by the means of the bets $(\lambda_i)$.
If, instead of requiring them to be bounded in $(-\frac{1}{1-\theta}, \frac{1}{\theta})$, we require them to be bounded in $(-\frac{c}{1-\theta}, \frac{c}{\theta})$ for some $0 < c \leq 1$, then we have that the components are bounded in $[1-c, 1 + c \max\{ \theta/(1-\theta), (1-\theta)/\theta \}]$, which now leads to nontrivial bounds on the $\pi_i(X_i)$.
In particular, for any desired lower bound $\pi_{\inf}$ for $\pi_i(X_i)$, we can now solve for some $c$ for which
\begin{equation}\label{eq:truncated-bets-mean}
    1 - \frac{1-c}{1 + c \max\{ \theta/(1-\theta), (1-\theta)/\theta \}} \leq \pi_{\inf},
\end{equation}
satisfying our requirements; we use $\pi_{\inf} = 1\%$.

We then have the following methods for doing inference with a fixed labelling budget $\pi_{\inf}$:\\
$\bullet$~\textbf{Only labelled samples:} collect $\lfloor \pi_{\inf} \cdot n \rfloor$ labelled samples, and use the standard, non-prediction-powered e-values of \cite{evalue-mean} to estimate the mean. For the bets $\lambda_i$, we use the aGRAPA method proposed by \cite{evalue-mean}, bounded to $(-\frac{1}{1-\theta}, \frac{1}{\theta})$;\\
$\bullet$~\textbf{Prediction-powered (ours):} use our prediction-powered e-values method atop the e-value with bets truncated as per Equation~\eqref{eq:truncated-bets-mean}. The predictive model is updated over the course of the inference, whenever we get a new data label. For the collection probabilities $\pi_i$, we always yield $\pi_{\inf}$, the lowest possible value, in an effort to minimize data collection costs.\\
$\bullet$~\textbf{Active prediction-powered (ours):} same as the previous `prediction-powered' method, but with a different choice of collection probabilities $\pi_i$. This time, rather than opting for constant, always as low as possible, probabilities, we follow an approximately optimal choice which takes into consideration the $X_i$, as delineated in Appendix~\ref{suppl:approx-optimal-pi}. This gives an `active inference'/`active learning' flavor to our method.\\
$\bullet$~\textbf{Imputation:} we simply learn a predictive model to predict the missing $Y_i$ from the available $X_i$, and impute the missing $Y_i$ with it without any care to use some prediction-powered inference method. This will often yield invalid inferences, but is very common in practice and thus a relevant baseline.\\
$\bullet$~\textbf{Vanilla prediction-powered:} for the sake of comparison to prior work, we also consider the method of \cite{ppi}. This method requires the prediction model to be fixed a priori, so we first split the collected labels in a training set to train it, and use the remaining labels for their prediction-powered inference method. For confidence intervals, we use CLT-based ones as proposed by the authors.

Figure~\ref{fig:confseq} shows the result of our experiment.
Our prediction-powered methods provides valid confidence intervals that are tighter and more concentrated around the true mean in comparison to only using labelled samples, while the imputation approach is strongly concentrated away from the true mean, and would lead to invalid conclusions.
In comparison to the method of \cite{ppi}, our method provides tighter intervals in spite of its nonasymptotic nature, likely due to its ability to train the predictive model without a data split.

\subsection{Testing the Online Risk: Online Monitoring of a Deployed Model for Forest Cover Prediction} \label{sec:experiment-2}

For our second experiment, we consider the task of monitoring the risk of a predictive model for forest cover types online.
Forest cover prediction is of wide use in remote sensing tasks and particularly for tracking of deforestation and land use, which is, in turn, very useful for climate research.
Moreover, online risk monitoring is ubiquitous and applicable to any setting where a predictive model is involved.

Again we have a data stream $(X_i, Y_i)_{i=1}^\infty$, where $X_i$ indicate input variables to our predictive model -- in this case, corresponding to data from satellite images -- and $Y_i$ are the labels denoting the corresponding cover type (which is a categorical variable).
Naturally, $Y_i$ is generally missing -- after all, if it weren't, then we would have no need to predict it.
In our experiment, we work on the dataset of \cite{dataset-covtype}. For the sake of evaluation, we have access to all $Y_i$, but will simulate the missingness.
The notion of risk in which we are interested is given by the 0-1 loss: $\text{Risk}_i(f) = \E[\ind[f(X_i) \neq Y_i]]$.
We have already trained the predictive model $f$ independent of our data stream (in the case of our experiment, in a separate training split) and have similarly evaluated it on a separate validation set, also independent of our data stream, upon which we obtained a validation 0-1 loss of $\text{ValRisk}$.
For continuous risk monitoring, we want to test the null hypothesis that
\[ H_0 : \text{Risk}_i(f) \leq \text{ValRisk} + \epsilon_\mathrm{tol}, \quad \text{for all }i = 1, 2, \ldots, \]
for some tolerance level $\epsilon_\mathrm{tol}$, for example equal to $0.05$.
In particular, we would like for this hypothesis test to be anytime-valid, so that at any point we can reach safe conclusions from it.

\begin{figure}
    \includegraphics[width=\columnwidth]{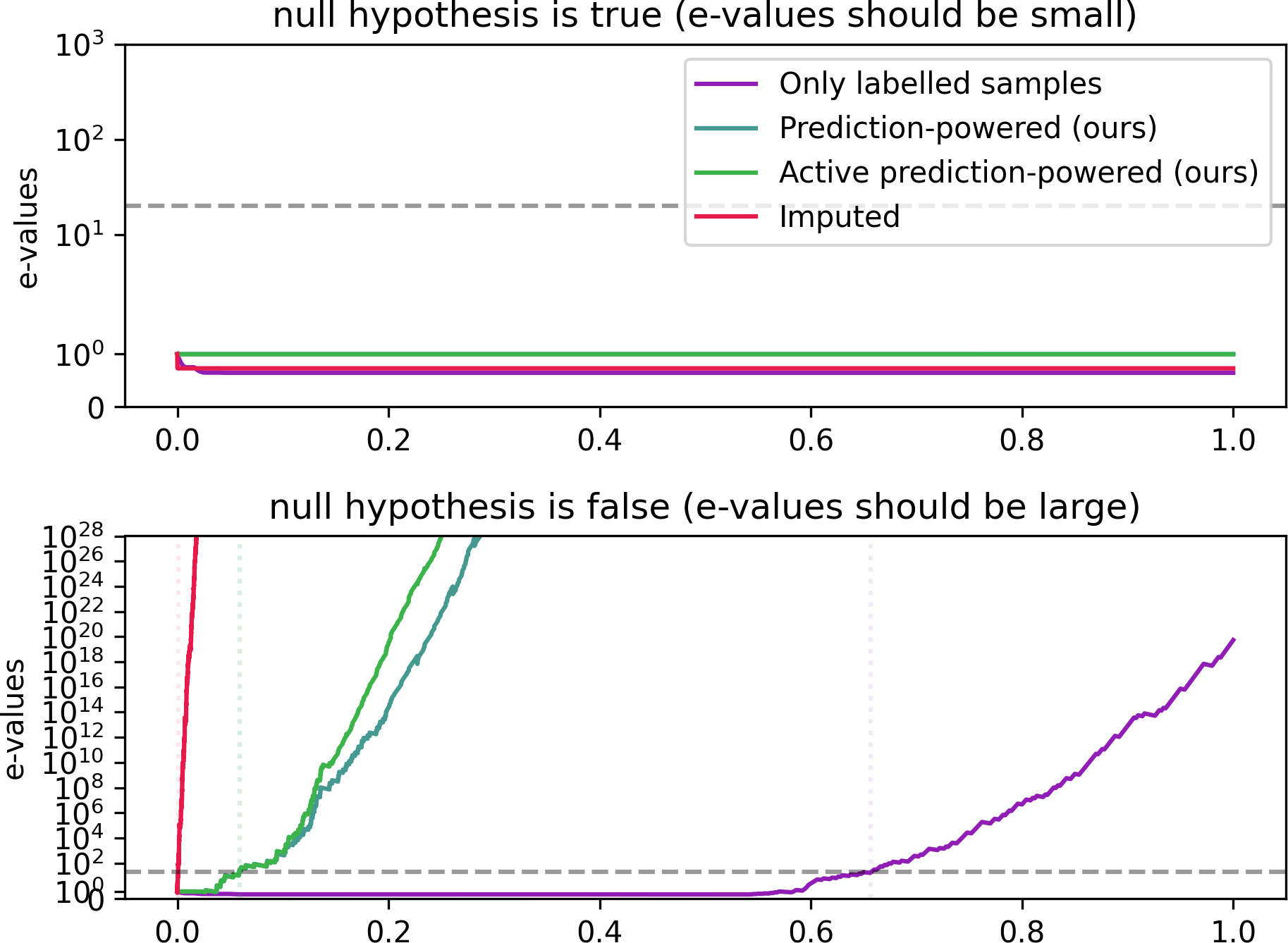}
    \vspace{-0.5cm}
    \caption{\textbf{Prediction-powered anytime-valid hypothesis testing.} The plot shows the e-values over time for testing two null hypotheses -- one on the bottom, which should be rejected, and one on top, which should not be rejected. Our prediction-powered e-values provide the strongest valid signal for rejection ($E\geq20$ for a significance level of 95\%, marked by the dashed lines), as the imputation approach rejects before the null is actually violated; for non-rejection ($E<20$), all the methods appear valid, but ours still attains the highest e-value.} \label{fig:hyptest}
\end{figure}

\begin{figure*}
    \includegraphics[width=\textwidth]{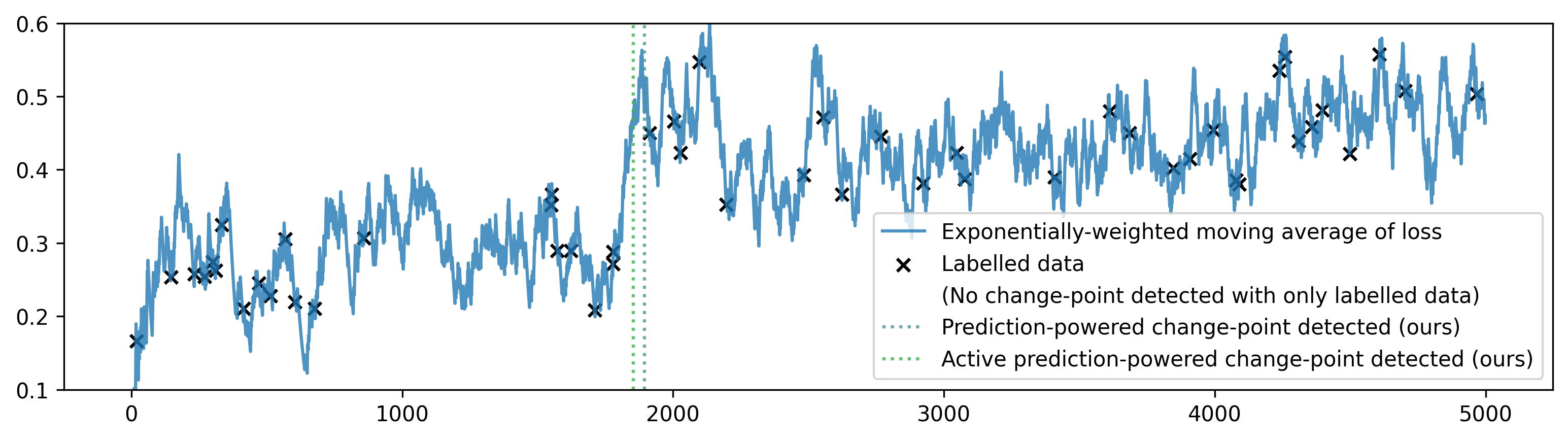}
    \vspace{-2em}
    \caption{\textbf{Prediction-powered change-point detection via e-values.} The plot shows the exponential moving average of a time series (in blue), with the few collected labels denoted by the scattered Xs. Our prediction-powered methods detect the change-point accurately, while the base method that only considers the labelled data points does not detect any change-point.}%
    \label{fig:changepoint}
\end{figure*}

Inspired by the work of \cite{evalue-online-risk-monitoring}, we consider the following e-value:
\begin{equation} \label{eq:base-evalue-experiment-2}
    E_n := \prod_{i=1}^n \biggl( 1 + \lambda_i \Bigl( \ind[f(X_i) \neq Y_i] - (\text{ValRisk} + \epsilon_\mathrm{tol}) \Bigr) \biggr),
\end{equation}
where $(\lambda_i)_{i=1}^\infty$ is a predictable sequence of bets bounded in $\bigl[ 0, 1/(\text{ValRisk} + \epsilon_\mathrm{tol}) \bigr)$. This forms a test supermartingale for the null $H_0$.

Much like in the example of inference of the prevalence of diabetes in Section~\ref{sec:experiment-1}, the e-values are already of the desired form, but additional care must be taken with regard to the limits of the components.
As-is, they are bounded in $[0, 1+\max\{1/(\text{ValRisk} + \epsilon_\mathrm{tol}) - 1, 0\}]$, meaning that our collection probabilities would have to be within $[1, 1]$.
Similarly to what we did in Section~\ref{sec:experiment-1}, we tweak the bounds for the bets $\lambda_i$ to make them bounded within $\bigl[0, c/(\text{ValRisk} + \epsilon_\mathrm{tol})\bigr)$ for some $0 < c \leq 1$, leading to components bounded in $[1-c, 1+c \max\{1/(\text{ValRisk} + \epsilon_\mathrm{tol}) - 1, 0\}]$. We can then solve for the $c$ that satisfies
\begin{equation}\label{eq:truncated-bets-mean-onesided}
    1 - \frac{1-c}{1 + c \max\{1/(\text{ValRisk} + \epsilon_\mathrm{tol}) - 1, 0\}} \leq \pi_{\inf},
\end{equation}
for a desired labelling budget $\pi_{\inf}$, which we take to be equal $0.5\%$.

The methods we consider for our experiment are akin to those of Section~\ref{sec:experiment-1}:\\
$\bullet$~\textbf{Only labelled samples} at every data point $i$, we sample $\xi_i \sim \mathrm{Bern}(\pi_{\inf})$. If $\xi_i = 1$, then we collect that data point and update the non-prediction-powered e-value in Equation~\eqref{eq:base-evalue-experiment-2}. Since the data collection is sampled independently of all else, this is a valid e-value, and forms a test supermartingale; moreover, only about $\approx \pi_{\inf} \cdot n$ samples will be collected. However, only data points where $\xi_i = 1$ are used for inference.\\
$\bullet$~\textbf{Prediction-powered (ours):} we compute the prediction-powered e-value atop the base-evalue in Equation~\eqref{eq:base-evalue-experiment-2} tweaked to satisfy the boundedness conditions as per Equation~\eqref{eq:truncated-bets-mean-onesided}. We then have two predictive models: one which is the predictive model whose risk we want to monitor -- $\mu$ -- and another which is used for prediction-powered inference, which receives $X_i$ and predicts the 0-1 loss for that point, $\ind[\mu(X_i) \neq Y_i]$. The first model $\mu$ is held static over the course of the inference, while the one for prediction-powered inference is updated whenever we collect a new label. Collection probabilities $\pi_i(X_i)$ are held constant at $\pi_{\inf}$, leading to label collection matching the baseline of only using labelled samples.\\
$\bullet$~\textbf{Active prediction-powered (ours):} the same as our `non-active' prediction-powered method, but label collection probabilities are given by the approximately optimal choice presented in Appendix~\ref{suppl:approx-optimal-pi}.\\
$\bullet$~\textbf{Imputation:} as a final baseline, we consider simply imputing the 0-1 loss at points where we have not collected the true label, with no regard to prediction-powreed inference. This is invalid in general, but commonly used in practice.

Note that standard prediction-powered inference methods (e.g., from \cite{ppi}) are no longer directly applicable due to the requirement of anytime-validity, as well as the fact that our hypothesis test does not come from a two-sided test for a mean (which would then be an instance of simple Z-estimation).

To fully assess the hypothesis test, we consider two settings here.
In the first setting, there is no change in distribution: the data stream for the inference follows the exact same distribution as training and validation, and thus the null hypothesis should hold.
For the second setting, we increasingly poison the labels over the course of time to simulate distribution drift.

The results can be seen in Figure~\ref{fig:hyptest}.
Without data poisoning, none of the methods reject the hypothesis, which is appropriate; though it is interesting to note that our prediction-powered methods were the ones with the highest e-values, managing to stay at around 1.
Under data poisoning, both of our prediction-powered methods detect the distribution drift extremely quicker than the method that only uses labelled samples, despite both having access to the same labelled samples.
The active prediction-powered method seems to reject the hypothesis a tiny bit earlier and yields larger e-values (i.e., with more evidence towards rejection), at the cost of just a tiny bit more data.
The imputation method seemingly detects the shift even earlier, but does so before the null hypothesis is actually falsified; thus, it produces a false alarm with extremely high confidence.

\subsection{Change-Point Detection: Detecting Changes in the Quality of a Deployed Model} \label{sec:experiment-3}

Still in the context of testing the cover prediction model of Section~\ref{sec:experiment-2}, we now consider not just detecting when the risk goes below a certain level, but detecting \emph{any} change.
E-values have seen good use in the change-point detection literature \cite{evalue-change-point-1,evalue-change-point-2,evalue-change-point-3}; we opt here for the method proposed in \cite{evalue-change-point-2}, where change-point detection is reduced to a simple algorithm atop confidence sequences initialized at each time step.
For the underlying confidence sequences, we use the same ones as in Section~\ref{sec:experiment-1}.
Compared to Section~\ref{sec:experiment-2}, the only change we make to the data is the introduction of a crisp change-point for better visualization.

Figure~\ref{fig:changepoint} displays a high-frequency exponentially moving average of our data (to give a notion of the underlying data stream) that uses data at all points, regardless of whether they are accessible to the analyst; scattered throughout are the few data points that were labelled and that the analyst does have access to.
Our prediction-powered method detects the change-point accurately while retaining the strong guarantees of \cite{evalue-change-point-2}, whereas the non-prediction-powered baseline that uses only available labelled data fails to detect any change-point.

\subsection{Causal Discovery: Constraint-Based Structure Learning with Costly Covariates} \label{sec:experiment-4}

\begin{figure}
    \includegraphics[width=.58\columnwidth]{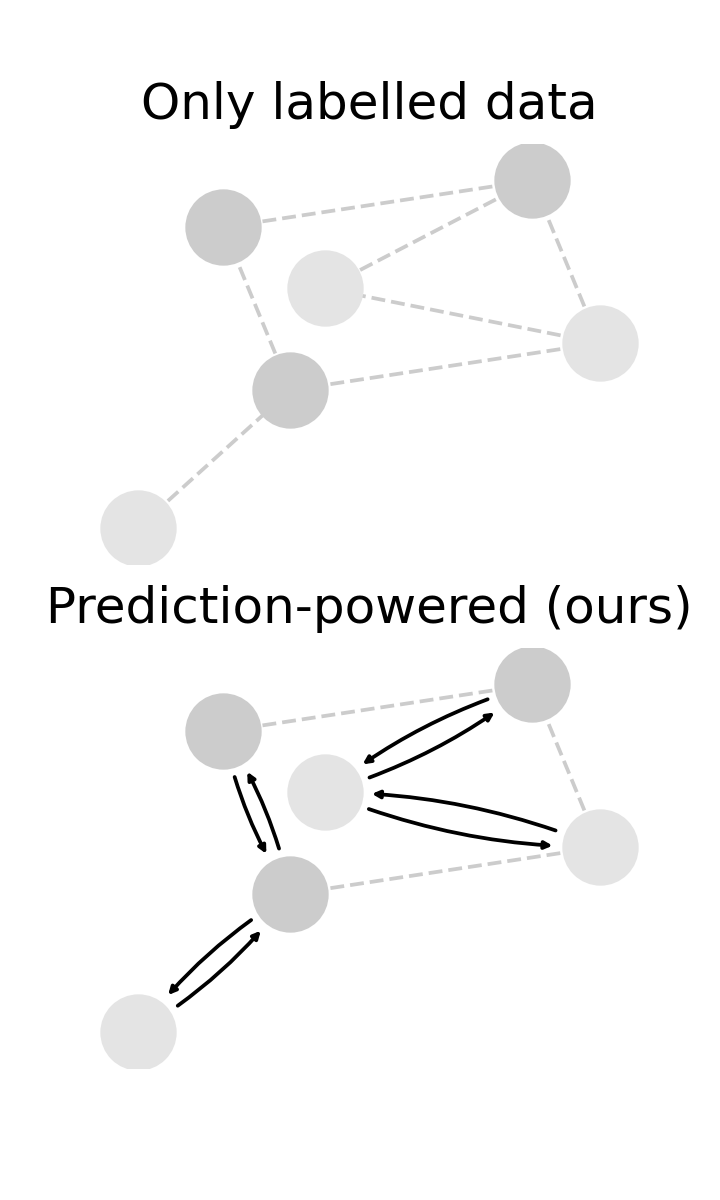}%
    {\raisebox{2cm}{%
        \includegraphics[width=.41\columnwidth]{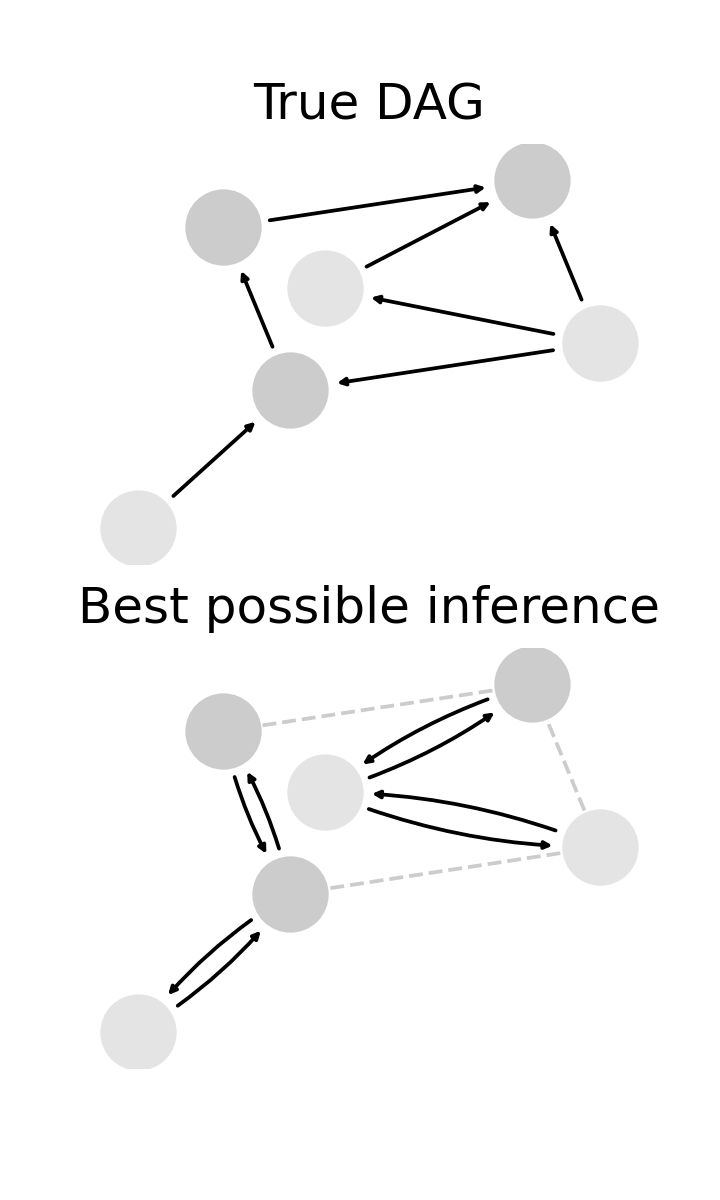}}}
    \vspace{-2em}
    \caption{\textbf{Prediction-powered causal discovery with e-values.} We compare our prediction-powered causal discovery method with one that uses only labelled data. The lighter nodes correspond to the costly variables, while the darker nodes correspond to cheaper readily-available ones. The standard base method does not detect any edges in the causal graph (denoted by the dashed edges), while ours detects as many edges as the `best possible' method, which uses all the data points regardless of data acquisition costs.} \label{fig:causal-discovery}
\end{figure}

Causal inference is of essence to any area where one plans interventions, but the usual methods require knowledge of a DAG describing (a simplified version of) the data generating process.
Causal discovery (a.k.a. causal structure learning) methods seek to learn this from data.
Some particularly common methods for causal learning include the PC \cite{algo-pc} and FCI \cite{algo-fci} algorithms; all of these belong to the class of so called `constraint-based' structure learning, where the DAG is inferred by the means of many hypothesis tests for conditional independencies.
In spite of potential multiple comparison concerns, these algorithms are generally said to be valid as long as the underlying hypothesis tests are valid (i.e., control type-I error).

In this section we consider the problem of causal discovery with the PC algorithm \cite{algo-pc} with some costly covariates, which will be generally missing.
As is usual in the causal discovery literature, we evaluate on synthetic data generated with a randomly generated DAG, in order to have access to the true DAG.
Our DAG features 6 variables, of which 3 are considered costly.
Overall, our cheap data $X_i$ consists of the 3 always-available variables, whereas the costly data $Y_i$ consists of the 6 full variables. For constraint-based causal learning, we need to be able to test hypotheses of the form
\[ H_0^{(A,B,C)} : A \indep B \mid C, \]
where $A$, $B$ and $C$ consist of subsets of our 6 variables, possibly empty.

There do exist sequential e-value tests for conditional independence \cite{evalue-conditional-independence-1,evalue-conditional-independence-2}, but they work under the Model-X framework, which requires knowledge of conditionals that are typically inaccessible in the context of causal discovery. We thus opt instead for Fisher's z-transformation of partial correlation test, which is commonly used in causal discovery implementations (e.g., \cite{pcalg,causal-learn}). But it is based on p-values, is not of sequential nature, is asymptotic, and works atop rather heavy normality assumptions.

We first need to adapt it to our required form, following Equation~\eqref{eq:prod-evalue}.
To do so, we first rearrange our data stream $(X_i, Y_i)_{i=1}^\infty$ to arrive in batches of $B$ samples, $(X^\batch_j, Y^\batch_j)_{j=1}^\infty$; these batches will be the unit of data for our prediction-powered procedure.
We can then compute the test's p-value for each batch, and calibrate this p-value into an e-value by the means of the following $\text{PToE}$ calibrator \cite{e-value-calibration}:
\[ \text{PToE}(p) = \frac{1 - p + p \log p}{p (-\log p)^2}. \]
To ensure that our e-values' components are appropriately bounded, we first clip the p-values (prior to calibration) to lie within $(10^{-7}, 1]$ (so that they are bounded at all; this clipping preserves the validity of the p-values), and then rescale the calibrated e-values by the means of a rescaling function
\[ \mathrm{rescale}_\eta (e) := \eta \cdot (e - 1) + 1, \]
with $\eta$ chosen so as to satisfy a labelling budget of $\pi_{\inf} = 10\%$ (as in the previous sections).
Because the p-values are only valid asymptotically, the batch size $B$ cannot be too small; we use $B=100$.

The results can be seen in Figure~\ref{fig:causal-discovery}.
When using only labelled data according to our data collection budget, the causal discovery method identifies no edges at all.
By using our prediction-powered e-values, we detect over half of the edges, matching the best possible scenario (i.e., what would happen if we had access to the whole dataset).

\section*{Impact Statement}

This paper presents work whose goal is to advance the field of 
Machine Learning and Statistics. There are many potential societal consequences 
of our work, none which we feel must be specifically highlighted here.

\section*{Acknowledgements}

This research is funded by Canada’s International Development Research Centre (IDRC) (Grant No. 109981) and UK International Development.


\bibliography{paper}
\bibliographystyle{icml2025}

\newpage
\appendix
\onecolumn

\section{Proofs} \label{suppl:proofs}

Throughout, we denote by $\filt_i$ the $i$-th element of the underlying data filtration.

\begin{theorem}[Theorem~\ref{thm:hypothesis-testing-valid} in the main text]
    $E^\ppi_n$ is a valid e-value for the null $H_0$. \\ Additionally:
    \begin{enumerate}[(i)]
        \item If $(E_0, E_1, \ldots)$ form a test supermartingale -- i.e., a nonnegative supermartingale with $\E[E_0] \leq 1$ under the null $H_0$ -- then so is $(E^\ppi_0, E^\ppi_1, \ldots)$;
        \item More generally, if $(E_0, E_1, \ldots)$ form an e-process -- i.e., a nonnegative stochastic process such that for all stopping times $\tau$, the null $H_0$ implies that $\E[E_\tau] \leq 1$ -- then so is $(E^\ppi_0, E^\ppi_1, \ldots)$ for all finite stopping times.
    \end{enumerate}
\end{theorem}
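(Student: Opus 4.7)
My plan is to first nail down the two structural properties of the prediction-powered components $(e^\ppi_i)$ that every part of the theorem relies on, and then dispatch the three claims in order of increasing difficulty.

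Nonnegativity is an easy case check: writing $u := e_i(\mu_i(X_i))$, $v := e_i(Y_i)$, $p := \pi_i(X_i)$, one has $e^\ppi_i = u \geq a_i > 0$ on $\{\xi_i = 0\}$, while on $\{\xi_i = 1\}$ one has $e^\ppi_i = u(p-1)/p + v/p$; minimizing the latter over $u,v \in [a_i,b_i]$ (using $p \leq 1$) yields the lower bound $(b_i p + a_i - b_i)/p$, which is $\geq 0$ precisely under the standing hypothesis $p \geq 1 - a_i/b_i$. The second structural fact is the conditional-expectation identity already derived in the excerpt, $\E[e^\ppi_i \mid \filt_{i-1}] = \E[e_i(Y_i) \mid \filt_{i-1}]$. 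Because the indicators $\xi_{1:i-1}$ are produced from $X_{1:i-1}$ together with external independent randomization, the usual immersion argument yields $\E[e_i(Y_i) \mid \filt_{i-1}] = \E[e_i(Y_i) \mid \mathcal{G}_{i-1}]$, where $\mathcal{G}_{i-1} := \sigma(X_1,Y_1,\ldots,X_{i-1},Y_{i-1})$ is the natural filtration for the original process $E_n$.

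For the plain e-value claim, conditioning on $\mathcal{G}_n$ makes the $\xi_i$'s independent, so $\E[E^\ppi_n \mid \mathcal{G}_n] = \prod_{i=1}^n \E[e^\ppi_i \mid \mathcal{G}_n] = \prod_{i=1}^n e_i(Y_i) = E_n$, whence $\E[E^\ppi_n] = \E[E_n] \leq 1$. For part (i), a forward induction suffices: since $E^\ppi_{n-1}$ is $\filt_{n-1}$-measurable,
\[ \E[E^\ppi_n \mid \filt_{n-1}] = E^\ppi_{n-1}\,\E[e_n(Y_n) \mid \mathcal{G}_{n-1}] \leq E^\ppi_{n-1}, \]
the last inequality being the product-form supermartingale condition on $E_n$; combined with nonnegativity and $E^\ppi_0 = 1$, this is precisely the test supermartingale property for $(E^\ppi_n)$.

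Part (ii) is where I expect the main obstacle, since $\filt$-stopping times can depend on the $\xi_i$'s in ways no $\mathcal{G}$-stopping time can, and the e-process hypothesis on $E_n$ does not by itself make $E_n$ a supermartingale. The plan is to reduce to the supermartingale case via a two-ingredient dominating construction. First, by the standard characterization of e-processes as exactly the nonnegative processes dominated by a test supermartingale, pick a nonnegative $\mathcal{G}$-supermartingale $M_n$ with $\E[M_0] \leq 1$ and $E_n \leq M_n$ a.s.; by immersion it remains an $\filt$-supermartingale. Second, define the likelihood-ratio process $Z_n := E^\ppi_n/E_n = \prod_{i=1}^n e^\ppi_i/e_i(Y_i)$, well-defined because $e_i(Y_i) \geq a_i > 0$. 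A variant of the excerpt's computation gives $\E[e^\ppi_n/e_n(Y_n) \mid \filt_{n-1} \vee \sigma(X_n,Y_n)] = 1$, so $Z_n$ is a nonnegative $\filt$-martingale with $Z_0 = 1$. Conditioning first on $\filt_{n-1} \vee \sigma(X_n,Y_n)$ to pull $M_n$ out as a multiplicative factor, then on $\filt_{n-1}$ to invoke the supermartingale property of $M$, shows that $Z_n M_n$ is a nonnegative $\filt$-supermartingale with $\E[Z_0 M_0] \leq 1$, and it dominates $E^\ppi_n = Z_n E_n \leq Z_n M_n$. Doob's optional stopping for bounded $\filt$-stopping times, together with the standard Fatou argument along $\tau \wedge k \uparrow \tau$, then yields $\E[E^\ppi_\tau] \leq \E[Z_\tau M_\tau] \leq \E[M_0] \leq 1$ for every finite $\filt$-stopping time $\tau$.
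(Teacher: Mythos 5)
Your nonnegativity check and your part (i) argument coincide with the paper's (for (i) you do not even need the detour through $\mathcal{G}_{n-1}$ and immersion: since $E_{n-1} \geq \prod_{i<n} a_i > 0$, the $\filt$-supermartingale hypothesis directly gives $\E[e_n(Y_n) \mid \filt_{n-1}] \leq 1$, which is exactly how the paper argues). The genuine gaps are in your treatment of the unconditional e-value claim and of part (ii), and both have the same source: you silently work in the non-adaptive regime, whereas the paper's setting makes $e_i$, $\mu_i$ and $\pi_i$ predictable with respect to the \emph{full} history, i.e.\ they may depend on earlier $\xi_j$'s and on which labels were actually revealed (this adaptivity is one of the paper's selling points). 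In that regime your factorization step fails: conditionally on $\mathcal{G}_n = \sigma(X_{1:n},Y_{1:n})$ the $\xi_i$'s are \emph{not} independent (the law of $\xi_2$ is $\mathrm{Bern}(\pi_2(X_2))$ with $\pi_2$ chosen as a function of $\xi_1$), the factors $e^\ppi_i$ are not conditionally independent, and quantities like $\E[e_i(Y_i)\mid \mathcal{G}_{i-1}]$ do not even typecheck because $e_i$ is not $\mathcal{G}$-measurable. The identity $\E[E^\ppi_n]=\E[E_n]$ is still true, but it has to be established by peeling off one factor at a time with the tower property; this is exactly the paper's backward-induction lemma (conditioning on $\filt_k$, splitting on $\xi_k$, using $\P[\xi_k=1\mid\filt_k]=\pi_k(X_k)$), which your one-shot conditioning on $\mathcal{G}_n$ does not replace.

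For part (ii) your route (dominate the e-process by a test supermartingale $M$, observe that $Z_n = E^\ppi_n/E_n$ is a nonnegative mean-one $\filt$-martingale, show $Z_nM_n$ is a supermartingale dominating $E^\ppi_n$, and apply optional stopping) is genuinely different from the paper's and is an attractive idea, but as written it does not close in the generality the theorem needs. The decoupling step -- conditioning on $\filt_{n-1}\vee\sigma(X_n,Y_n)$ and pulling $M_n$ out -- requires $M_n$ to carry no $\xi_n$-dependence, i.e.\ you need the dominating supermartingale to be adapted to the data-only filtration $\mathcal{G}$. That in turn forces you to read the hypothesis as ``$E$ is a $\mathcal{G}$-adapted e-process with respect to $\mathcal{G}$'' plus an immersion/exogeneity assumption, neither of which is available in the adaptive case: there $E_n$ itself is only $\filt$-adapted, the Snell-envelope supermartingale is $\filt$-adapted and may depend on $\xi_n$, and the product of a martingale with a supermartingale is not in general a supermartingale, so the argument collapses exactly where the difficulty you correctly identified (stopping times that depend on the $\xi_i$'s) lives. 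The paper instead handles $\tau$ by applying its backward-induction lemma with the conditioning event $A=\{\tau=n\}$ and then invoking the e-process property of the base process at $\tau$, so that no dominating supermartingale is ever constructed; if you want to salvage your construction you would need to state the extra exogeneity/immersion assumptions explicitly, and then it only proves a restricted version of the theorem.
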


\begin{proof}
    First, note that $E^\ppi_n$ is always nonnegative for all $n \in \N$: by induction, it holds for $n = 0$ (where $E^\ppi_n = E^\ppi_0 = 1$), and, for the inductive step,
    \begin{align*}
        &E^\ppi_{n+1} = E^\ppi_{n} \cdot \left( e_{n+1} (\mu_{n+1}(X_{n+1})) + \bigl[ e_{n+1} (Y_{n+1}) - e_{n+1} (\mu_{n+1}(X_{n+1})) \bigr] \cdot \frac{\xi_{n+1}}{\pi_{n+1} (X_{n+1})} \right) \geq 0
        \\ &\iff e_{n+1} (\mu_{n+1}(X_{n+1})) + \bigl[ e_{n+1} (Y_{n+1}) - e_{n+1} (\mu_{n+1}(X_{n+1})) \bigr] \cdot \frac{\xi_{n+1}}{\pi_{n+1} (X_{n+1})} \geq 0;
    \end{align*}
    If $\xi_{n+1} = 0$, then the left-hand-side equals $e_{n+1}(\mu_{n+1}(X_{n+1})) \geq a_{n+1} > 0$.
    Otherwise, it equals
    \begin{align*}
        &\frac{e_{n+1}(Y_{n+1}) - (1 - \pi_{n+1}(X_{n+1})) e_{n+1}(\mu_{n+1}(X_{n+1}))}{\pi_{n+1}(X_{n+1})}
        \geq \frac{a_{n+1} - (1 - \pi_{n+1}(X_{n+1})) b_{n+1}}{\pi_{n+1}(X_{n+1})} \geq 0
        \\ &\iff a_{n+1} - (1 - \pi_{n+1}(X_{n+1})) b_{n+1} \geq 0
        \iff a_{n+1} \geq (1 - \pi_{n+1}(X_{n+1})) b_{n+1}
        \\ &\iff 1 - a_{n+1}/b_{n+1} \leq \pi_{n+1}(X_{n+1}),
    \end{align*}
    which holds by construction.

    So all that remains is to show that its properties under the null hold. Hence, from here on out, we assume that the null $H_0$ is true.

    We will first show that, for any $n \in \N$, $\E[E^\ppi_n] \leq 1$.
    To do so, we will first prove the following lemma by backward induction:
    \begin{lemma}\label{thm:lemma-backwardsinduction}
        Let $n \in \N$ and $A$ denote an event.
        Then, for any $1 \leq k \leq n$, it holds that $\E[\prod_{i=k}^n e^\ppi_i \cond A, \filt_k] = \E[\prod_{i=k}^n e_i(Y_i) \cond A, \filt_k]$
    \end{lemma}
    \begin{proof}
        The base case is when $k = n$. Then
        \begin{align*}
            \E\left[ \prod_{i=k}^n e^\ppi_i \cond A, \filt_k \right]
            &= \E\left[ e^\ppi_n \cond A, \filt_n \right]
            = \E\left[ e_n(\mu_n(X_n)) + \frac{\xi_n}{\pi_n(X_n)} \left( e_n(Y_n) - e_n(\mu_n(X_n)) \right) \cond A, \filt_n \right]
            \\ &= \E\left[ e_n(\mu_n(X_n)) \cond A, \filt_n \right]
                + \E\left[ \frac{\xi_n}{\pi_n(X_n)} \left( e_n(Y_n) - e_n(\mu_n(X_n)) \right) \cond \xi_n = 1, A, \filt_n \right] \P[\xi_n = 1 \cond A, \filt_n]
                \\ &\qquad+ \E\left[ \frac{\xi_n}{\pi_n(X_n)} \left( e_n(Y_n) - e_n(\mu_n(X_n)) \right) \cond \xi_n = 0, A, \filt_n \right] \P[\xi_n = 0 \cond A, \filt_n]
            \\ &= \E\left[ e_n(\mu_n(X_n)) \cond A, \filt_n \right]
                + \E\left[ \frac{1}{\pi_n(X_n)} \left( e_n(Y_n) - e_n(\mu_n(X_n)) \right) \cond A, \filt_n \right] \pi_n(X_n)
            \\ &= \E\left[ e_n(\mu_n(X_n)) \cond A, \filt_n \right]
                + \E\left[ e_n(Y_n) - e_n(\mu_n(X_n)) \cond A, \filt_n \right]
            \\ &= \E\left[ e_n(Y_n) \cond A, \filt_n \right] = \E\left[ \prod_{i=k}^n e_i(Y_i) \cond A, \filt_k \right].
        \end{align*}

        For the induction step, given that the hypothesis holds for $k+1 \leq n$, we want to show that it holds for $k$. It follows, using the law of total expectation:

        \begin{align*}
            \E\left[ \prod_{i=k}^n e^\ppi_i \cond A, \filt_k \right]
            &= \E\left[ e^\ppi_k \prod_{i=k+1}^n e^\ppi_i \cond A, \filt_k \right]
            = \E\left[ \E\left[ e^\ppi_k \prod_{i=k+1}^n e^\ppi_i \cond A, \filt_{k+1} \right] \cond A, \filt_k \right]
            \\ &= \E\left[ e^\ppi_k \ \E\left[ \prod_{i=k+1}^n e^\ppi_i \cond A, \filt_{k+1} \right] \cond A, \filt_k \right]
            = \E\left[ e^\ppi_k \ \E\left[ \prod_{i=k+1}^n e_i(Y_i) \cond A, \filt_{k+1} \right] \cond A, \filt_k \right]
            \\ &= \E\left[ \left( e_k(\mu_k(X_k)) + \frac{\xi_k}{\pi_k(X_k)} \left( e_k(Y_k) - e_k(\mu_k(X_k)) \right) \right) \ \E\left[ \prod_{i=k+1}^n e_i(Y_i) \cond A, \filt_{k+1} \right] \cond A, \filt_k \right]
            \\ &= \E\left[ e_k(\mu_k(X_k)) \ \E\left[ \prod_{i=k+1}^n e_i(Y_i) \cond A, \filt_{k+1} \right] \cond A, \filt_k \right]
                \\ &\quad + \E\left[ \frac{\xi_k}{\pi_k(X_k)} \left( e_k(Y_k) - e_k(\mu_k(X_k)) \right) \ \E\left[ \prod_{i=k+1}^n e_i(Y_i) \cond A, \filt_{k+1} \right] \cond \xi_k = 1, A, \filt_k \right] \P[\xi_k = 1 \cond A, \filt_k]
                \\ &\quad + \E\left[ \frac{\xi_k}{\pi_k(X_k)} \left( e_k(Y_k) - e_k(\mu_k(X_k)) \right) \ \E\left[ \prod_{i=k+1}^n e_i(Y_i) \cond A, \filt_{k+1} \right] \cond \xi_k = 0, A, \filt_k \right] \P[\xi_k = 0 \cond A, \filt_k]
            \\ &= \E\left[ e_k(\mu_k(X_k)) \ \E\left[ \prod_{i=k+1}^n e_i(Y_i) \cond A, \filt_{k+1} \right] \cond A, \filt_k \right]
                \\ &\quad + \E\left[ \frac{1}{\pi_k(X_k)} \left( e_k(Y_k) - e_k(\mu_k(X_k)) \right) \ \E\left[ \prod_{i=k+1}^n e_i(Y_i) \cond A, \filt_{k+1} \right] \cond A, \filt_k \right] \pi_k(X_k)
            \\ &= \E\left[ e_k(\mu_k(X_k)) \ \E\left[ \prod_{i=k+1}^n e_i(Y_i) \cond A, \filt_{k+1} \right] \cond A, \filt_k \right]
                \\ &\quad + \E\left[ \left( e_k(Y_k) - e_k(\mu_k(X_k)) \right) \ \E\left[ \prod_{i=k+1}^n e_i(Y_i) \cond A, \filt_{k+1} \right] \cond A, \filt_k \right]
            \\ &= \E\left[ e_k(Y_k) \ \E\left[ \prod_{i=k+1}^n e_i(Y_i) \cond A, \filt_{k+1} \right] \cond A, \filt_k \right]
            = \E\left[ \E\left[ e_k(Y_k) \prod_{i=k+1}^n e_i(Y_i) \cond A, \filt_{k+1} \right] \cond A, \filt_k \right]
            \\ &= \E\left[ \E\left[ \prod_{i=k}^n e_i(Y_i) \cond A, \filt_{k+1} \right] \cond A, \filt_k \right]
            = \E\left[ \prod_{i=k}^n e_i(Y_i) \cond A, \filt_k \right],
        \end{align*}
        as we desired.
    \end{proof}

    By picking $k = 1$ and $A$ to be a trivial event in Lemma~\ref{thm:lemma-backwardsinduction}, we conclude that $\E[E^\ppi_n] = \E[\prod_{i=1}^n e^\ppi_i \cond \filt_1] = \E[\prod_{i=1}^n e_i(Y_i) \cond \filt_1] = \E[E_n] \leq 1$, and so $E^\ppi_n$ is a valid e-value.

    Now let us show that, if the underlying e-values form a test supermartingale, then so is the prediction-powered process. By definition $E^\ppi_0 = E_0 = 1$, and so all we need to do is to show that $\E[E^\ppi_{n+1} \cond \filt_n] \leq E^\ppi_n$. It follows:

    \begin{align*}
        \E[E^\ppi_{n+1} \cond \filt_n]
        &= \E[e^\ppi_{n+1} \cdot E^\ppi_n \cond \filt_n]
        = \E[e^\ppi_{n+1} \cond \filt_n] \cdot E^\ppi_n
        \\ &= \E\left[ e_{n+1}(\mu_{n+1}(X_{n+1})) + \frac{\xi_{n+1}}{\pi_{n+1}(X_{n+1})} \left( e_{n+1}(Y_{n+1}) - e_{n+1}(\mu_{n+1}(X_{n+1})) \right) \cond \filt_n \right] \cdot E^\ppi_n
        \\ &= \E\left[ e_{n+1}(\mu_{n+1}(X_{n+1})) \cond \filt_n \right] \cdot E^\ppi_n
            \\ &\qquad + \E\left[ \frac{\xi_{n+1}}{\pi_{n+1}(X_{n+1})} \left( e_{n+1}(Y_{n+1}) - e_{n+1}(\mu_{n+1}(X_{n+1})) \right) \cond \xi_{n+1} = 1, \filt_n \right] \P[\xi_{n+1} = 1 \cond \filt_n] \cdot E^\ppi_n
            \\ &\qquad + \E\left[ \frac{\xi_{n+1}}{\pi_{n+1}(X_{n+1})} \left( e_{n+1}(Y_{n+1}) - e_{n+1}(\mu_{n+1}(X_{n+1})) \right) \cond \xi_{n+1} = 0, \filt_n \right] \P[\xi_{n+1} = 0 \cond \filt_n] \cdot E^\ppi_n
        \\ &= \E\left[ e_{n+1}(\mu_{n+1}(X_{n+1})) \cond \filt_n \right] \cdot E^\ppi_n
            \\ &\qquad + \E\left[ \frac{1}{\pi_{n+1}(X_{n+1})} \left( e_{n+1}(Y_{n+1}) - e_{n+1}(\mu_{n+1}(X_{n+1})) \right) \cond \xi_{n+1} = 1, \filt_n \right] \pi_{n+1}(X_{n+1}) \cdot E^\ppi_n
        \\ &= \E\left[ e_{n+1}(\mu_{n+1}(X_{n+1})) \cond \filt_n \right] \cdot E^\ppi_n
            + \E\left[ e_{n+1}(Y_{n+1}) - e_{n+1}(\mu_{n+1}(X_{n+1})) \cond \xi_{n+1} = 1, \filt_n \right] \cdot E^\ppi_n
        \\ &= \E\left[ e_{n+1}(Y_{n+1}) \cond \filt_n \right] \cdot E^\ppi_n
        \\ &= \E\left[ e_{n+1}(Y_{n+1}) \cond \filt_n \right] \cdot E_n \cdot \frac{E^\ppi_n}{E_n}
        \\ &\leq E_n \cdot \frac{E^\ppi_n}{E_n} = E^\ppi_n.
    \end{align*}

    Finally, we assume that the underlying e-values form an e-process for finite stopping times. We want to show that, for any finite stopping time $\tau$, $\E[E^\ppi_\tau] \leq 1$. Well,

    \[ \E[E^\ppi_\tau] = \E[\E[E^\ppi_\tau \cond \tau]]; \]

    When $\tau = n$ for each $n \in \N$, by Lemma~\ref{thm:lemma-backwardsinduction} with $k = n$ and $A = \{ \tau = n \}$, it holds that $\E[E^\ppi_n \cond \tau = n] = \E[\prod_{i=1}^n e^\ppi_i \cond \tau = n, \filt_1] = \E[\prod_{i=1}^n e_i(Y_i) \cond \tau = n, \filt_1] = \E[E_n \cond \tau = n]$.

    Thus
    \[ \E[E^\ppi_\tau] = \E[\E[E^\ppi_\tau \cond \tau]] = \E[\E[E_\tau \cond \tau]] = \E[E_\tau] \leq 1, \]

    and so $E^\ppi$ is an e-process.
\end{proof}

To prove the main theorem about power of our prediction-powered e-values, we will use the following change-of-measure lemma based on the Wasserstein distance:

\begin{lemma}\label{thm:wasserstein-lemma}
    For any distributions $P$ and $Q$ over some space $\mathcal{Z}$ and any $L$-Lipschitz function $\phi : \mathcal{Z} \to \R$,
    \[ \lvert \E_P[\phi] - \E_Q[\phi] \rvert \leq L\,W(P \Vert Q), \]
    where $W(P \Vert Q)$ is the Wasserstein distance between $P$ and $Q$.
\end{lemma}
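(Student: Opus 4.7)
The plan is to prove this directly from the primal (Kantorovich) definition of the Wasserstein distance; the statement is essentially one direction of the Kantorovich--Rubinstein duality, so nothing deep is needed.

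First I would recall that, assuming $\mathcal{Z}$ is equipped with a metric $d$ (the same one with respect to which $\phi$ is $L$-Lipschitz and $W$ is defined), the Wasserstein-$1$ distance admits the primal representation
\begin{equation*}
    W(P \Vert Q) = \inf_{\gamma \in \Pi(P,Q)} \E_{(Z,Z') \sim \gamma}[ d(Z, Z') ],
\end{equation*}
where $\Pi(P,Q)$ denotes the set of all couplings of $P$ and $Q$. I would take this (or equivalently the Kantorovich--Rubinstein dual) as the working definition.

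Next, for any fixed coupling $\gamma \in \Pi(P,Q)$, the marginal property gives
\begin{equation*}
    \E_P[\phi] - \E_Q[\phi] = \E_{(Z,Z') \sim \gamma}[ \phi(Z) - \phi(Z') ].
\end{equation*}
Taking absolute values, applying Jensen's inequality (or the triangle inequality for the integral), and invoking the $L$-Lipschitz property of $\phi$ pointwise yields
\begin{equation*}
    \lvert \E_P[\phi] - \E_Q[\phi] \rvert \leq \E_{(Z,Z') \sim \gamma}[ \lvert \phi(Z) - \phi(Z') \rvert ] \leq L \cdot \E_{(Z,Z') \sim \gamma}[ d(Z, Z') ].
\end{equation*}
Since this bound holds for every coupling $\gamma$, I can take the infimum over $\gamma \in \Pi(P,Q)$ on the right-hand side, which by the primal representation equals $L \cdot W(P \Vert Q)$, finishing the proof.

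I do not foresee a real obstacle here: the only subtle point is that the result requires $P$ and $Q$ to have finite first moments with respect to $d$ (so that the integrals make sense and an optimal coupling exists), which in the context of the paper is automatic since the $e_i$ are bounded and hence the relevant integrands are integrable. If one preferred a fully self-contained argument without appealing to existence of an optimal coupling, the infimum step above works with any sequence of $\varepsilon$-optimal couplings, so the bound goes through unchanged.
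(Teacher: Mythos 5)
Your proof is correct, but it takes a different route from the paper's. The paper treats the Kantorovich--Rubinstein dual (IPM) form $W(P \Vert Q) = \sup_{\|f\|_\Lip = 1} \lvert \E_P[f] - \E_Q[f] \rvert$ as the working representation, so the lemma follows in one line by rescaling: $\phi/L$ is $1$-Lipschitz, hence $\lvert \E_P[\phi] - \E_Q[\phi] \rvert = L \lvert \E_P[\phi/L] - \E_Q[\phi/L] \rvert \leq L\,W(P \Vert Q)$. You instead start from the primal coupling definition $W(P \Vert Q) = \inf_{\gamma \in \Pi(P,Q)} \E_\gamma[d(Z,Z')]$, bound the difference of expectations through an arbitrary coupling via the pointwise Lipschitz estimate, and take the infimum. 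Both arguments are valid; which one is ``deeper'' depends only on which representation of $W$ one takes as the definition. The paper's version is shorter but implicitly leans on Kantorovich--Rubinstein duality being available as the stated form of $W$; yours is more self-contained from the transport definition, needs no duality theorem (only the easy direction, which is exactly what you prove), and your remark that $\varepsilon$-optimal couplings suffice correctly avoids any appeal to existence of an optimal coupling. Your note on finite first moments is also a reasonable precaution, and indeed harmless here since the paper applies the lemma to bounded functions of the form $\log(e_i(\cdot) + \cdots)$.
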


\begin{proof}
    The proof follows immediately from the representation of the Wasserstein distance as an IPM. The Wasserstein distance, written as an IPM, is
    \[ W(P \Vert Q) = \sup_{\|f\|_\Lip = 1} \lvert \E_P[f] - \E_Q[f] \rvert. \]
    If $\phi$ is $L$-Lipschitz, then $\phi/L$ is 1-Lipschitz, and so
    \[
        \lvert \E_P[\phi] - \E_Q[\phi] \rvert
        = \lvert L \E_P[\phi/L] - L \E_Q[\phi/L] \rvert
        = L \lvert \E_P[\phi/L] - \E_Q[\phi/L] \rvert
        \leq L \sup_{\|f\|_\Lip = 1} \lvert \E_P[f] - \E_Q[f] \rvert
        = L\,W(P \Vert Q).
    \]
\end{proof}

\begin{theorem}[Theorem~\ref{thm:hypothesis-testing-power} in the main text]
    Suppose that the $e_i(\cdot)$ are each $L_i$-Lipschitz, and that $\pi_i(X_i) \geq 1 - a_i/b_i + \epsilon_i$ for some $\epsilon_i > 0$, for all $i$. Then there exists some constant $c > 0$ independent of $n$ such that
    \begin{align*}
        \E\left[ \frac{1}{n} \log E^\ppi_n \right]
        \geq \E\left[ \frac{1}{n} \log E_n \right] - \frac{c}{n} \sum_{i=1}^n \E[ W(\mu_i(X_i) \Vert Y_i) ].
    \end{align*}
    where $W(\mu_i(X_i) \Vert Y_i)$ is the Wasserstein distance between $\mu_i(X_i)$ and $Y_i$, conditional on all else prior.
\end{theorem}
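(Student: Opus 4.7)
The plan is to reduce to a per-step analysis via the additivity of $\log$ over the product, and then bound each term using Lemma~\ref{thm:wasserstein-lemma}. Specifically, write
\[
\E\!\left[\tfrac{1}{n}\log E^\ppi_n\right] - \E\!\left[\tfrac{1}{n}\log E_n\right]
= \tfrac{1}{n}\sum_{i=1}^n \E\bigl[\log e^\ppi_i - \log e_i(Y_i)\bigr],
\]
and, by the tower property, analyze each summand conditional on $\mathcal{G}_i := \filt_{i-1} \vee \sigma(X_i)$. Under this conditioning $m := e_i(\mu_i(X_i))$ and $\pi := \pi_i(X_i)$ are deterministic, and splitting on $\xi_i$ gives
\[
\E[\log e^\ppi_i \mid \mathcal{G}_i] = (1-\pi)\log m + \pi\,\E[\phi_1(Y_i)\mid\mathcal{G}_i],
\qquad \E[\log e_i(Y_i) \mid \mathcal{G}_i] = \E[\phi_0(Y_i)\mid\mathcal{G}_i],
\]
where $\phi_0(y) := \log e_i(y)$ and $\phi_1(y) := \log\bigl((e_i(y) + (\pi-1)m)/\pi\bigr)$.

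The key observation is that both functions agree at the model's prediction: $\phi_0(\mu_i(X_i)) = \log m$ and $\phi_1(\mu_i(X_i)) = \log\bigl((m + (\pi-1)m)/\pi\bigr) = \log m$. Consequently the per-step difference telescopes,
\[
\E[\log e^\ppi_i - \log e_i(Y_i) \mid \mathcal{G}_i]
= \pi\bigl(\E_{Y_i}[\phi_1] - \phi_1(\mu_i(X_i))\bigr) - \bigl(\E_{Y_i}[\phi_0] - \phi_0(\mu_i(X_i))\bigr),
\]
and Lemma~\ref{thm:wasserstein-lemma} applies with the Dirac mass at $\mu_i(X_i)$ as one endpoint and the conditional law of $Y_i$ given $\mathcal{G}_i$ as the other, yielding a bound of the form $\bigl(\pi\,\Lip(\phi_1) + \Lip(\phi_0)\bigr)\,W(\mu_i(X_i) \Vert Y_i \mid \mathcal{G}_i)$ in absolute value. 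The Lipschitz constants come from the boundedness of the $e_i$: $\phi_0$ inherits constant $L_i/a_i$ since $e_i \geq a_i$, while the argument of the $\log$ inside $\phi_1$ satisfies $e_i(y) + (\pi-1)m \geq a_i - b_i(1-\pi) \geq \epsilon_i b_i$ by the chain of inequalities used in Theorem~\ref{thm:hypothesis-testing-valid}, so $\Lip(\phi_1) \leq L_i/(\epsilon_i b_i)$.

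Setting $c := \sup_i\bigl(L_i/(\epsilon_i b_i) + L_i/a_i\bigr)$ (finite under standard uniform-in-$i$ hypotheses on the constants), taking outer expectations over $\mathcal{G}_i$ and summing over $i$ then delivers the claimed bound, since $\pi \leq 1$ and the absolute value bound gives the required one-sided inequality. The main technical obstacle is precisely the uniform lower bound on the argument of $\phi_1$: this is what the strict margin $\epsilon_i > 0$ buys us, and without it the Lipschitz constant of $\phi_1$ would blow up as $Y_i$ approaches the critical value where $e_i(Y_i) + (\pi-1)m \to 0$. A minor subtlety is the precise meaning of ``conditional on all else prior'' in the statement of $W$; the cleanest reading (conditioning on $\mathcal{G}_i$, so that $\mu_i(X_i)$ is a Dirac) is exactly what Lemma~\ref{thm:wasserstein-lemma} consumes, and any coarser conditioning can be recovered by an additional application of the tower property and convexity of $W_1$ in the endpoints.
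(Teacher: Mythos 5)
Your proof is correct, but it takes a genuinely different route from the paper's. The paper conditions on $Y_i,\xi_i,\pi_i(X_i),\filt_i$, so that the integrand $\log\bigl(e_i(\cdot)+\tfrac{\xi_i}{\pi_i(X_i)}[e_i(Y_i)-e_i(\cdot)]\bigr)$ is a single Lipschitz function of $\mu_i(X_i)$ (Lipschitz because the margin $\epsilon_i$ keeps the argument of the $\log$ away from zero), and then applies the Wasserstein IPM lemma once per step to swap the conditional law of $\mu_i(X_i)$ for the point mass at $Y_i$. You instead condition on $\filt_{i-1}\vee\sigma(X_i)$, so that $\mu_i(X_i)$ and $\pi$ are fixed, average out $\xi_i$ first to get the deterministic mixture $(1-\pi)\log m+\pi\,\E[\phi_1(Y_i)]$, exploit the nice observation that $\phi_0$ and $\phi_1$ coincide at $\mu_i(X_i)$, and apply the lemma twice (Dirac at $\mu_i(X_i)$ versus the conditional law of $Y_i$), giving the explicit constant $\pi\,\Lip(\phi_1)+\Lip(\phi_0)\le L_i/(\epsilon_i b_i)+L_i/a_i$ where the paper leaves a generic $u\cdot L_i\cdot\max\{(1-\pi)/\pi,1\}$. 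The two arguments bound slightly different instantiations of the informally stated quantity $W(\mu_i(X_i)\Vert Y_i)$ ``conditional on all else prior'': the paper's $W$ compares the conditional law of $\mu_i(X_i)$ given $Y_i$ and the past to $\delta_{Y_i}$, yours compares $\delta_{\mu_i(X_i)}$ to the conditional law of $Y_i$ given $X_i$ and the past; both are legitimate readings (yours arguably the more natural one), though your closing remark that coarser conditionings ``can be recovered'' by convexity of $W_1$ points the wrong way — convexity gives $W_{\mathrm{coarse}}\le\E[W_{\mathrm{fine}}]$, so the finely conditioned bound is the weaker statement, a looseness the paper's own proof shares. Likewise, your explicit $c:=\sup_i(L_i/(\epsilon_i b_i)+L_i/a_i)$ with the caveat about uniformity in $i$ is an honest treatment of an issue the paper glosses over when it declares its $i$- and $X_i$-dependent constant ``independent of $n$.''
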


\begin{proof}
    First, note that
    \begin{align*}
        \E\left[ \frac{1}{n} \log E^\ppi_n \right]
        &= \E\left[ \frac{1}{n} \log \prod_{i=1}^n e^\ppi_i \right]
        = \frac{1}{n} \sum_{i=1}^n \E\left[ \log e^\ppi_i \right]
        \\ &= \frac{1}{n} \sum_{i=1}^n \E\left[ \log \left( e_i(\mu_i(X_i)) + \frac{\xi_i}{\pi_i(X_i)} [ e_i(Y_i) - e_i(\mu_i(X_i)) ] \right) \right].
        \\ &= \frac{1}{n} \sum_{i=1}^n \E\left[ \E\left[ \log \left( e_i(\mu_i(X_i)) + \frac{\xi_i}{\pi_i(X_i)} [ e_i(Y_i) - e_i(\mu_i(X_i)) ] \right) \cond Y_i, \xi_i, \pi_i(X_i), \filt_i \right] \right].
    \end{align*}
    The inner expectation in the last line is random only over $\mu_i(X_i)$.
    Moreover, thanks to our assumptions, the value we are taking the expectation over is Lipschitz as a function of $\mu_i(X_i)$: because of the lower bound on the $\pi_i(X_i)$ with positive margins $\epsilon_i$, the value within the log is bounded away from zero, and so the log becomes Lipschitz with some constant $u > 0$.
    \[
        \left\lVert \log \left( e_i(\cdot) + \frac{\xi_i}{\pi_i(X_i)} [ e_i(Y_i) - e_i(\cdot) ] \right) \right\rVert_\Lip
        \leq u \left\lVert e_i(\cdot) + \frac{\xi_i}{\pi_i(X_i)} [ e_i(Y_i) - e_i(\cdot) ] \right\rVert_\Lip;
    \]
    If $\xi_i = 0$, then this equals $u \left\lVert e_i(\cdot) \right\rVert_\Lip = u \cdot L_i$.
    Otherwise, this equals
    \begin{align*}
        u \left\lVert e_i(\cdot) + \frac{\xi_i}{\pi_i(X_i)} [ e_i(Y_i) - e_i(\cdot) ] \right\rVert_\Lip
        &= u \left\lVert \frac{e_i(Y_i) - (1 - \pi_i(X_i)) e_i(\cdot)}{\pi_i(X_i)} \right\rVert_\Lip
        \\ &= u \frac{1}{\pi_i(X_i)} \left\lVert e_i(Y_i) - (1 - \pi_i(X_i)) e_i(\cdot) \right\rVert_\Lip
        \\ &= u \frac{1}{\pi_i(X_i)} \left\lVert - (1 - \pi_i(X_i)) e_i(\cdot) \right\rVert_\Lip
        \\ &= u \frac{(1 - \pi_i(X_i))}{\pi_i(X_i)} \left\lVert e_i(\cdot) \right\rVert_\Lip
        = u \cdot L_i \cdot \frac{(1 - \pi_i(X_i))}{\pi_i(X_i)}.
    \end{align*}
    In either case, this Lipschitz constant is upper bounded by $c := u \cdot L_i \cdot \max \left\{ \frac{(1 - \pi_i(X_i))}{\pi_i(X_i)}, 1 \right\}$ (which does not depend on $n$). Hence, by Lemma~\ref{thm:wasserstein-lemma},
    \begin{align*}
        &\frac{1}{n} \sum_{i=1}^n \E\left[ \E\left[ \log \left( e_i(\mu_i(X_i)) + \frac{\xi_i}{\pi_i(X_i)} [ e_i(Y_i) - e_i(\mu_i(X_i)) ] \right) \cond Y_i, \xi_i, \pi_i(X_i), \filt_i \right] \right].
        \\ &\geq \frac{1}{n} \sum_{i=1}^n \E\left[ \E\left[ \log \left( e_i(Y_i) + \frac{\xi_i}{\pi_i(X_i)} [ e_i(Y_i) - e_i(Y_i) ] \right) \cond Y_i, \xi_i, \pi_i(X_i), \filt_i \right] - c\,W(\mu_i(X_i) \Vert Y_i) \right].
        \\ &= \frac{1}{n} \sum_{i=1}^n \E\left[ \E\left[ \log e_i(Y_i) \cond Y_i, \xi_i, \pi_i(X_i), \filt_i \right] - c\,W(\mu_i(X_i) \Vert Y_i) \right].
        \\ &= \frac{1}{n} \sum_{i=1}^n \E\left[ \log e_i(Y_i) \right] - \frac{1}{n} \sum_{i=1}^n \E\left[ c\,W(\mu_i(X_i) \Vert Y_i) \right].
        \\ &= \E\left[ \frac{1}{n} \log E_n \right] - \frac{c}{n} \sum_{i=1}^n \E\left[ W(\mu_i(X_i) \Vert Y_i) \right].
    \end{align*}
\end{proof}

The following is a more precise statement about the growth rate of our prediction-powered e-values, albeit less directly interpretable:

\begin{theorem} \label{thm:suppl-hypothesis-testing-power-1}
    It holds that
    \[ \E\left[ \frac{1}{n} \log E^\ppi_n \right] = \E\biggl[ \frac{1}{n} \sum_{i=1}^n (1 - \pi_i(X_i)) \log e_i (\mu_i(X_i)) \biggr] + \E\biggl[ \frac{1}{n} \sum_{i=1}^n \pi_i(X_i) \log \left( e_i (\mu_i(X_i)) + \frac{e_i (Y_i) - e_i (\mu_i(X_i))}{\pi_i (X_i)} \right) \biggr]. \]
\end{theorem}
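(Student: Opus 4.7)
The approach is essentially a direct computation by conditioning on $\xi_i$ at each step. I would begin by expanding the logarithm of the product into a sum of logarithms of the individual components:
\[
    \E\left[ \frac{1}{n} \log E^\ppi_n \right] = \frac{1}{n}\sum_{i=1}^n \E\bigl[ \log e^\ppi_i \bigr],
\]
using the fact that $E^\ppi_n = \prod_{i=1}^n e^\ppi_i$ and that each $e^\ppi_i$ is almost surely positive (this positivity was already established in the proof of Theorem~\ref{thm:hypothesis-testing-valid}, so $\log e^\ppi_i$ is well-defined).

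Then I would handle each term $\E[\log e^\ppi_i]$ via the tower property, conditioning on all randomness except for $\xi_i$. Concretely, let $\mathcal{G}_i$ be the $\sigma$-algebra generated by $\filt_i$ together with $X_i$ and $Y_i$ (so that $\mu_i(X_i)$, $\pi_i(X_i)$, $e_i(\mu_i(X_i))$ and $e_i(Y_i)$ are all $\mathcal{G}_i$-measurable, and only $\xi_i$ remains random). Conditional on $\mathcal{G}_i$, $\xi_i$ is $\mathrm{Bern}(\pi_i(X_i))$, and the definition of $e^\ppi_i$ gives
\[
    e^\ppi_i = \begin{cases} e_i(\mu_i(X_i)) & \text{if } \xi_i = 0, \\ e_i(\mu_i(X_i)) + \dfrac{e_i(Y_i) - e_i(\mu_i(X_i))}{\pi_i(X_i)} & \text{if } \xi_i = 1. \end{cases}
\]
So splitting the conditional expectation on $\{\xi_i=0\}$ versus $\{\xi_i=1\}$ yields
\[
    \E[\log e^\ppi_i \mid \mathcal{G}_i] = (1 - \pi_i(X_i)) \log e_i(\mu_i(X_i)) + \pi_i(X_i) \log\!\left( e_i(\mu_i(X_i)) + \frac{e_i(Y_i) - e_i(\mu_i(X_i))}{\pi_i(X_i)} \right).
\]
Taking outer expectations, summing over $i$, and dividing by $n$ yields exactly the claimed identity.

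There are no real obstacles here — unlike the proof of Theorem~\ref{thm:hypothesis-testing-valid} there is no need for backward induction, because the identity is termwise in $i$ and the logarithm turns the product into a sum. The only minor point worth being careful about is that the two terms inside $\log$ are both strictly positive: $e_i(\mu_i(X_i)) \geq a_i > 0$ by assumption, and $e_i(\mu_i(X_i)) + (e_i(Y_i) - e_i(\mu_i(X_i)))/\pi_i(X_i)$ is nonnegative by the boundedness condition $\pi_i(X_i) \geq 1 - a_i/b_i$ established in the proof of Theorem~\ref{thm:hypothesis-testing-valid} (and in fact strictly positive whenever $\pi_i(X_i)$ has margin above $1 - a_i/b_i$). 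This ensures that $\log e^\ppi_i$ is integrable so that the exchange of expectation and sum, and the conditioning step, are all justified.
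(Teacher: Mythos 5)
Your proposal is correct and follows essentially the same route as the paper's proof: expand $\log E^\ppi_n$ into the sum $\sum_i \E[\log e^\ppi_i]$, condition so that only $\xi_i$ is random, split on $\xi_i \in \{0,1\}$ with conditional probability $\pi_i(X_i)$, and apply the tower property. The only cosmetic difference is that you condition on the larger $\sigma$-algebra including $Y_i$ while the paper conditions on $\filt_i$ and keeps $Y_i$ inside the inner expectation; your explicit remark on positivity of the arguments of the logarithm is a nice touch the paper leaves implicit.
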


\begin{proof}
    \begin{align*}
        \E\left[ \frac{1}{n} \log E^\ppi_n \right]
        &= \E\left[ \frac{1}{n} \log \prod_{i=1}^n e^\ppi_i \right]
        = \frac{1}{n} \sum_{i=1}^n \E\left[ \log e^\ppi_i \right]
        \\ &= \frac{1}{n} \sum_{i=1}^n \E\left[ \log \left( e_i (\mu_i(X_i)) + \bigl[ e_i (Y_i) - e_i (\mu_i(X_i)) \bigr] \cdot \frac{\xi_i}{\pi_i (X_i)} \right) \right]
        \\ &= \frac{1}{n} \sum_{i=1}^n \E\left[ \E\left[ \log \left( e_i (\mu_i(X_i)) + \bigl[ e_i (Y_i) - e_i (\mu_i(X_i)) \bigr] \cdot \frac{\xi_i}{\pi_i (X_i)} \right) \cond \filt_i \right] \right]
        \\ &= \frac{1}{n} \sum_{i=1}^n \E\biggl[ \E\left[ \log \left( e_i (\mu_i(X_i)) + \bigl[ e_i (Y_i) - e_i (\mu_i(X_i)) \bigr] \cdot \frac{\xi_i}{\pi_i (X_i)} \right) \cond \xi_i = 1, \filt_i \right] \P[\xi_i = 1 \cond \filt_i]
            \\ &\qquad\qquad + \E\left[ \log \left( e_i (\mu_i(X_i)) + \bigl[ e_i (Y_i) - e_i (\mu_i(X_i)) \bigr] \cdot \frac{\xi_i}{\pi_i (X_i)} \right) \cond \xi_i = 0, \filt_i \right] \P[\xi_i = 0 \cond \filt_i] \biggr]
        \\ &= \frac{1}{n} \sum_{i=1}^n \E\biggl[ \E\left[ \log \left( e_i (\mu_i(X_i)) + \bigl[ e_i (Y_i) - e_i (\mu_i(X_i)) \bigr] \cdot \frac{1}{\pi_i (X_i)} \right) \cond \filt_i \right] \pi_i(X_i)
            \\ &\qquad\qquad + \E\left[ \log e_i (\mu_i(X_i)) \cond \filt_i \right] (1 - \pi_i(X_i)) \biggr]
        \\ &= \frac{1}{n} \sum_{i=1}^n \E\biggl[ \pi_i(X_i) \log \left( e_i (\mu_i(X_i)) + \bigl[ e_i (Y_i) - e_i (\mu_i(X_i)) \bigr] \cdot \frac{1}{\pi_i (X_i)} \right) + (1 - \pi_i(X_i)) \log e_i (\mu_i(X_i)) \biggr]
        \\ &= \E\biggl[ \frac{1}{n} \sum_{i=1}^n (1 - \pi_i(X_i)) \log e_i (\mu_i(X_i)) \biggr] + \E\biggl[ \frac{1}{n} \sum_{i=1}^n \pi_i(X_i) \log \left( e_i (\mu_i(X_i)) + \frac{e_i (Y_i) - e_i (\mu_i(X_i))}{\pi_i (X_i)} \right) \biggr].
    \end{align*}
\end{proof}

To prove the next result we will make use of Ville's inequality:

\begin{theorem}[Ville's inequality \cite{ville,ville-ramdas}] \label{thm:ville}
    For any nonnegative supermartingale $(L_t)$ and any $x > 1$, define the (possibly infinite) stopping time
    $ N := \inf{ t \geq 1 : L_t \geq x } $
    and denote the expected overshoot when $L_t$ surpasses $x$ as
    \[ o = \E\left[ \frac{L_N}{x} \cond N < \infty \right] \geq 1. \]
    Then,
    \[ \P[ \exists t : L_t \geq x ] \leq \frac{\E[L_0]}{o x} \geq \frac{\E[L_0]}{x}. \]
\end{theorem}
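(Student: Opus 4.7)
The plan is to apply the optional stopping theorem to the nonnegative supermartingale $(L_t)$ at the bounded stopping time $N \wedge t$, and then let $t \to \infty$. First I would invoke optional stopping for supermartingales, which gives $\E[L_{N \wedge t}] \leq \E[L_0]$ for every finite $t$. This is routine because $N \wedge t$ is bounded, so no extra integrability hypothesis is required beyond $\E[L_0] < \infty$ (which is implicit for the bound to be meaningful).

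Next I would pass to the limit $t \to \infty$. Since $L_t \geq 0$, Fatou's lemma gives
\begin{equation*}
    \E\bigl[\liminf_{t \to \infty} L_{N \wedge t}\bigr] \leq \liminf_{t \to \infty} \E[L_{N \wedge t}] \leq \E[L_0].
\end{equation*}
On the event $\{N < \infty\}$ we have $L_{N \wedge t} = L_N$ for all $t \geq N$, so the liminf equals $L_N$ there. On the complement $\{N = \infty\}$ the liminf is nonnegative, so dropping it only weakens the bound. Therefore $\E[L_N \, \ind[N < \infty]] \leq \E[L_0]$.

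Now I would exploit the definition of $N$: on $\{N < \infty\}$ one has $L_N \geq x$, so
\begin{equation*}
    \E[L_N \, \ind[N < \infty]] = \E[L_N \cond N < \infty]\,\P[N < \infty] = x \cdot o \cdot \P[N < \infty].
\end{equation*}
Combining with the previous inequality yields $\P[N < \infty] \leq \E[L_0]/(o\,x)$. Finally, note that $\{N < \infty\} = \{\exists t \geq 1 : L_t \geq x\}$, giving the stated inequality; the second inequality in the display is immediate since $o \geq 1$ (indeed $L_N/x \geq 1$ on $\{N < \infty\}$).

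The main obstacle is the limit interchange: $(L_t)$ need not converge in $L^1$ or even almost surely in a useful way on $\{N = \infty\}$. Fatou handles this cleanly thanks to nonnegativity, which is why no uniform integrability or closure assumption is needed. A secondary subtlety is verifying the overshoot bookkeeping: one must be careful that $o$ is defined as a conditional expectation on $\{N < \infty\}$, which is only well-posed when $\P[N < \infty] > 0$, but in the opposite case the left-hand side of the target inequality is $0$ and the result is trivial, so the argument still goes through.
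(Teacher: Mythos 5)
Your proof is correct: optional stopping at the bounded time $N \wedge t$, Fatou via nonnegativity, and the overshoot bookkeeping on $\{N < \infty\}$ (including the degenerate case $\P[N<\infty]=0$) is exactly the standard argument for Ville's inequality. The paper itself does not prove this statement -- it is quoted as a known result from the cited references -- so there is no internal proof to compare against, but your route is the canonical one from that literature. One small remark: the paper's display reads $\P[\exists t : L_t \geq x] \leq \E[L_0]/(o x) \geq \E[L_0]/x$, which is a typo (the intended chain is $\E[L_0]/(o x) \leq \E[L_0]/x$ since $o \geq 1$); your reading, that the overshoot-free bound is the weaker one, is the correct interpretation.
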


\begin{proposition}[Proposition~\ref{thm:confidence-interval-valid} in the main text]
    $C^{\ppi-(\alpha)}_n$ is a valid confidence interval -- i.e., $\P[\theta^\star \in C^{\ppi-(\alpha)}_n] \geq 1 - \alpha$.
    Moreover:
    \begin{enumerate}[(i)]
        \item If the underlying e-values form a nonnegative supermartingale, then the prediction-powered intervals are anytime-valid (also known as confidence sequences): $\P[\forall n \in \N,\ \theta^\star \in C^{\ppi-(\alpha)}_n] \geq 1 - \alpha$;
        \item More generally, if the underlying e-values form e-processes, then the prediction-powered intervals are valid at arbitrary stopping times: $\P[\theta^\star \in C^{\ppi-(\alpha)}_\tau] \geq 1 - \alpha$ for any stopping time $\tau$.
    \end{enumerate}
\end{proposition}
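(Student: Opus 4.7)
The plan is to reduce each of the three claims to a tail bound on the particular prediction-powered e-value indexed by the true parameter $\theta^\star$, invoking the validity results already established in Theorem~\ref{thm:hypothesis-testing-valid}. The key observation is that the confidence set is, by construction, the set of parameters that are \emph{not} rejected by the corresponding e-value test at level $\alpha$, so miscoverage happens precisely when $E^{\ppi-(\theta^\star)}_n \geq 1/\alpha$. Since $\theta^\star$ satisfies $H_0^{(\theta^\star)}$ by definition, the statistical properties of $E^{\ppi-(\theta^\star)}$ under the null are exactly what we need.

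For the fixed-$n$ marginal coverage claim, I would first note that Theorem~\ref{thm:hypothesis-testing-valid} tells us $E^{\ppi-(\theta^\star)}_n$ is a valid e-value for $H_0^{(\theta^\star)}$, i.e.\ $\E[E^{\ppi-(\theta^\star)}_n] \leq 1$. A direct application of Markov's inequality then yields $\P[E^{\ppi-(\theta^\star)}_n \geq 1/\alpha] \leq \alpha$, which rearranges to $\P[\theta^\star \in C^{\ppi-(\alpha)}_n] \geq 1-\alpha$.

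For part (i), I would invoke Theorem~\ref{thm:hypothesis-testing-valid}(i): if the base e-values form a test supermartingale, then so does $(E^{\ppi-(\theta^\star)}_n)_{n \geq 0}$. Applying Ville's inequality (Theorem~\ref{thm:ville}) to this nonnegative supermartingale with $x = 1/\alpha$ gives $\P[\exists n : E^{\ppi-(\theta^\star)}_n \geq 1/\alpha] \leq \alpha \cdot \E[E^{\ppi-(\theta^\star)}_0] \leq \alpha$, and taking complements produces the uniform-in-$n$ coverage statement. For part (ii), I would use Theorem~\ref{thm:hypothesis-testing-valid}(ii) to conclude $(E^{\ppi-(\theta^\star)}_n)$ is an e-process for finite stopping times, so $\E[E^{\ppi-(\theta^\star)}_\tau] \leq 1$ for any such $\tau$, and then Markov again gives $\P[\theta^\star \notin C^{\ppi-(\alpha)}_\tau] = \P[E^{\ppi-(\theta^\star)}_\tau \geq 1/\alpha] \leq \alpha$.

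There is no real obstacle here; the proof is essentially bookkeeping once Theorem~\ref{thm:hypothesis-testing-valid} is in hand. The only subtle point worth stating carefully is that the family of e-values $\{E^{\ppi-(\theta)}\}_{\theta \in \Theta}$ is well-defined in the sense that for the specific (deterministic) value $\theta = \theta^\star$, the null $H_0^{(\theta^\star)}$ holds, so that the conclusions of Theorem~\ref{thm:hypothesis-testing-valid} apply to that particular process; no uniform-in-$\theta$ control is needed because we are only asserting coverage of the single true parameter.
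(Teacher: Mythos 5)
Your proposal is correct and follows essentially the same route as the paper's own proof: Markov's inequality applied to $E^{\ppi-(\theta^\star)}_n$ for fixed-$n$ coverage, Ville's inequality for the supermartingale/anytime-valid case, and the e-process property plus Markov at a finite stopping time, all riding on Theorem~\ref{thm:hypothesis-testing-valid}. No meaningful differences beyond your (slightly more careful) handling of the strict-versus-nonstrict inequality defining the confidence set.
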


\begin{proof}
    By construction, $\P[\theta^\star \in C^{\ppi-(\alpha)}_n] = \P[E^{\ppi-(\theta^\star)}_n \leq 1/\alpha] = 1 - \P[E^{\ppi-(\theta^\star)}_n > 1/\alpha]$. By Markov, considering that the null $H_0^{\theta^\star}$ holds and using Theorem~\ref{thm:hypothesis-testing-valid},
    \[
        1 - \P[E^{\ppi-(\theta^\star)}_n > 1/\alpha] \geq 1 - \frac{\E[E^{\ppi-(\theta^\star)}]}{1/\alpha} \geq 1 - \frac{1}{1/\alpha} = 1 - \alpha.
    \]

    If the underlying e-values form a test supermartingale, then by Theorem~\ref{thm:hypothesis-testing-valid} so do the prediction-powered e-values; then, using Ville's inequality,
    \begin{align*}
        \P[\forall n \in \N,\ \theta^\star \in C^{\ppi-(\alpha)}_n]
        &= \P[\forall n \in \N,\ E^{\ppi-(\theta^\star)}_n \leq 1/\alpha]
        = \P[\sup_n \ E^{\ppi-(\theta^\star)}_n \leq 1/\alpha]
        \\ &= 1 - \P[\sup_n \ E^{\ppi-(\theta^\star)}_n > 1/\alpha]
        \geq 1 - \frac{\E[E^{\ppi-(\theta^\star)}_0]}{1/\alpha}
        = 1 - \frac{1}{1/\alpha} = 1 - \alpha.
    \end{align*}

    Finally, if the underlying e-values form an e-process, thenby Theorem~\ref{thm:hypothesis-testing-valid} so do the prediction-powered e-values (for finite stopping times), and so, by Markov,
    \begin{align*}
        \P[\theta^\star \in C^{\ppi-(\alpha)}_\tau]
        &= \P[E^{\ppi-(\theta^\star)}_\tau \leq 1/\alpha]
        = 1 - \P[E^{\ppi-(\theta^\star)}_\tau > 1/\alpha]
        \\ &\geq 1 - \frac{\E[E^{\ppi-(\theta^\star)}_\tau]}{1/\alpha}
        \geq 1 - \frac{1}{1/\alpha}
        = 1 - \alpha.
    \end{align*}
\end{proof}

\begin{proposition}[Proposition~\ref{thm:confidence-interval-power} in the main text]
    Under the assumptions of Theorem~\ref{thm:hypothesis-testing-power},
    let $\nu$ be a measure over the parameter space $\Theta$. Then there exists some $c$ for which
    \[ \E\left[\int \frac{1}{n} \log \frac{1}{E^{\ppi-(\theta)}_n} \dif\nu(\theta)\right] \leq \E\left[\int \frac{1}{n} \log \frac{1}{E^{(\theta)}_n} \dif\nu(\theta)\right] + \frac{\nu(\Theta)c}{n} \sum_{i=1}^n W(\mu_i(X_i) \Vert Y_i). \]
\end{proposition}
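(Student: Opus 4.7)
\noindent\textbf{Proof proposal for Proposition~\ref{thm:confidence-interval-power}.}

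My plan is to reduce the statement to Theorem~\ref{thm:hypothesis-testing-power} applied pointwise in $\theta$, and then to transfer the pointwise bound to the integral against $\nu$ via Fubini--Tonelli. More precisely, I would fix an arbitrary $\theta \in \Theta$ and note that, for this $\theta$, both $(E^{(\theta)}_n)$ and $(E^{\ppi-(\theta)}_n)$ satisfy the hypotheses of Theorem~\ref{thm:hypothesis-testing-power} (Lipschitz e-value components with constants $L_i^{(\theta)}$, and $\pi_i(X_i) \geq 1 - a_i^{(\theta)}/b_i^{(\theta)} + \epsilon_i$). Applying that theorem yields some constant $c_\theta > 0$ with
\begin{align*}
    \E\left[ \frac{1}{n} \log \frac{1}{E^{\ppi-(\theta)}_n} \right]
    \leq \E\left[ \frac{1}{n} \log \frac{1}{E^{(\theta)}_n} \right]
    + \frac{c_\theta}{n} \sum_{i=1}^n \E[ W(\mu_i(X_i) \Vert Y_i) ],
\end{align*}
simply by multiplying the bound in Theorem~\ref{thm:hypothesis-testing-power} by $-1$.

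The next step is to produce a $\theta$-uniform constant. The constants $c_\theta$ in the proof of Theorem~\ref{thm:hypothesis-testing-power} are built from $L_i^{(\theta)}$, the lower bound on $\pi_i(X_i)$, and the margin $\epsilon_i$ controlling how far the log is from its singularity. Since the $e_i(\cdot)$ here refer to the same underlying base components (only indexed by $\theta$), and the assumptions of Theorem~\ref{thm:hypothesis-testing-power} are supposed to hold throughout $\Theta$, I would set
\[ c := \sup_{\theta \in \Theta} c_\theta, \]
which is finite under the implicit hypothesis that the Lipschitz constants and the margins $\epsilon_i$ are uniformly controlled over $\Theta$ (this is exactly what is being assumed when one invokes ``Under the assumptions of Theorem~\ref{thm:hypothesis-testing-power}'' in a family-indexed setting). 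With this uniform $c$, the pointwise inequality above holds for every $\theta \in \Theta$ with $c_\theta$ replaced by $c$.

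Finally, I would integrate both sides against $\nu$ and use Tonelli's theorem to interchange $\E[\cdot]$ and $\int \dif\nu(\theta)$. Since $\log(1/E^{\ppi-(\theta)}_n)$ is bounded from above by $-\sum_i \log a_i^{(\theta)}$ and from below by $-\sum_i \log b_i^{(\theta)}$ (and analogously for $E^{(\theta)}_n$), the integrands are jointly measurable and integrable under the ambient assumptions, so Tonelli applies. The rightmost term has an integrand that is constant in $\theta$, hence produces the factor $\nu(\Theta)$, yielding
\begin{align*}
    \E\!\left[\int \frac{1}{n} \log \frac{1}{E^{\ppi-(\theta)}_n} \dif\nu(\theta)\right]
    \leq \E\!\left[\int \frac{1}{n} \log \frac{1}{E^{(\theta)}_n} \dif\nu(\theta)\right]
    + \frac{\nu(\Theta)c}{n} \sum_{i=1}^n \E[W(\mu_i(X_i) \Vert Y_i)],
\end{align*}
which matches the claimed bound (absorbing the outer $\E$ on the Wasserstein term into the statement's shorthand).

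The step I expect to be the main obstacle is the uniformity of $c$ in $\theta$: one must verify that the Lipschitz constants $L_i^{(\theta)}$ and the distance of $\pi_i(X_i)$ from $1 - a_i^{(\theta)}/b_i^{(\theta)}$ remain controlled as $\theta$ varies over $\Theta$. Everything else is routine calculus (sign flip), Fubini/Tonelli, and the already-proved pointwise bound from Theorem~\ref{thm:hypothesis-testing-power}.
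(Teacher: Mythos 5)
Your proposal is correct and follows essentially the same route as the paper's proof: apply Theorem~\ref{thm:hypothesis-testing-power} pointwise in $\theta$, flip the sign, and interchange expectation and the $\nu$-integral via Fubini/Tonelli, with the $\theta$-independent Wasserstein term producing the factor $\nu(\Theta)$. Your explicit attention to the uniformity of the constant $c$ over $\Theta$ is a point the paper glosses over (it simply asserts a single $c$), so that remark is a welcome refinement rather than a deviation.
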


\begin{proof}
    By Fubini,
    \[ \E\left[\int \frac{1}{n} \log 1/E^{\ppi-(\theta)}_n \dif\nu(\theta)\right] = \int \E\left[\frac{1}{n} \log 1/E^{\ppi-(\theta)}_n\right] \dif\nu(\theta) \]
    And now we apply Theorem~\ref{thm:hypothesis-testing-power}:
    \begin{align*}
        \int \E\left[\frac{1}{n} \log 1/E^{\ppi-(\theta)}_n\right] \dif\nu(\theta)
        &= -\int \E\left[\frac{1}{n} \log E^{\ppi-(\theta)}_n\right] \dif\nu(\theta)
        \leq -\int \left( \E\left[\frac{1}{n} \log E^{(\theta)}_n\right] - \frac{c}{n} \sum_{i=1}^n W(\mu_i(X_i) \Vert Y_i) \right) \dif\nu(\theta)
        \\ &\leq \int \left( \E\left[\frac{1}{n} \log 1/E^{(\theta)}_n\right] + \frac{c}{n} \sum_{i=1}^n W(\mu_i(X_i) \Vert Y_i) \right) \dif\nu(\theta)
        \\ &= \int \E\left[\frac{1}{n} \log 1/E^{(\theta)}_n\right] \dif\nu(\theta) + \frac{\nu(\Theta) c}{n} \sum_{i=1}^n W(\mu_i(X_i) \Vert Y_i)
        \\ &= \E\left[\int \frac{1}{n} \log 1/E^{(\theta)}_n \dif\nu(\theta)\right] + \frac{\nu(\Theta) c}{n} \sum_{i=1}^n W(\mu_i(X_i) \Vert Y_i).
    \end{align*}
\end{proof}

Considering that the object of interest is a confidence interval, it is desirable to further bound the \emph{measure} of the interval.
We were unable to prove any sufficiently general result that was (i) nonvacuous, and (ii) decayed reasonably fast as $n$ increased, and imagine that heavy assumptions are necessary; this may be best done on a case-by-case basis.
Nevertheless, here is one possible somewhat straightforward result.

\begin{proposition}\label{thm:hacky-confidence-interval-measure}
    Under the same conditions of Proposition~\ref{thm:confidence-interval-power}, suppose that the prediction-powered e-values are bounded from above by $M^\ppi$ (i.e., for all $\theta \in \Theta$, $E^{\ppi-(\theta)}_n < M^\ppi$ almost surely), and similarly for the non-prediction powered e-values by $M$ (i.e., for all $\theta \in \Theta$, $E^{(\theta)}_n < M$ almost surely). Then:
    Then
    \[
        \E[\nu(C^\ppi)] \leq \frac{\E[\int \log 1/E^{\ppi-(\theta)} \dif \nu(\theta)] + \nu(\Theta) M^\ppi}{\log \alpha + \log M^\ppi},
        \qquad\qquad
        \E[\nu(C)] \leq \frac{\E[\int \log 1/E^{(\theta)} \dif \nu(\theta)] + \nu(\Theta) M}{\log \alpha + \log M}.
    \]
\end{proposition}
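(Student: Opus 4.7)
The plan is a one-line Markov-type argument in log-space, applied pointwise in $\theta$ and integrated against $\nu$; Fubini then lets us absorb the outer expectation. I describe the prediction-powered case, as the non-prediction-powered one is identical with $M^\ppi$ replaced by $M$. Notably, Proposition~\ref{thm:confidence-interval-power} itself is not needed in this proof; it would only enter if one wished to post-process the numerator into the non-prediction-powered quantity plus the Wasserstein penalty.

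First I would rewrite the measure of the confidence region as an integrated indicator and pass to log-scale:
\[ \nu(C^{\ppi-(\alpha)}_n) \;=\; \int_\Theta \ind\bigl[E^{\ppi-(\theta)}_n < 1/\alpha\bigr]\,\dif\nu(\theta) \;=\; \int_\Theta \ind\bigl[\log(1/E^{\ppi-(\theta)}_n) > \log\alpha\bigr]\,\dif\nu(\theta). \]
Because $\log\alpha < 0$, Markov is not directly applicable. The key trick is to translate the inequality into the positive regime: since $E^{\ppi-(\theta)}_n \leq M^\ppi$ almost surely, the quantity $\log(1/E^{\ppi-(\theta)}_n) + \log M^\ppi = \log(M^\ppi/E^{\ppi-(\theta)}_n)$ is nonnegative, and in the non-vacuous regime $M^\ppi > 1/\alpha$ the threshold $\log\alpha + \log M^\ppi$ is strictly positive. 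Applying $\ind[X > c] \leq X/c$ pointwise in $\theta$ with $X = \log(M^\ppi/E^{\ppi-(\theta)}_n)$ and $c = \log\alpha + \log M^\ppi$ then yields
\[ \ind\bigl[\log(1/E^{\ppi-(\theta)}_n) > \log\alpha\bigr] \;\leq\; \frac{\log(1/E^{\ppi-(\theta)}_n) + \log M^\ppi}{\log\alpha + \log M^\ppi}. \]
Integrating against $\nu$, taking expectation, and using Fubini to swap $\E$ and $\int$ then gives the claimed bound.

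There is no genuine obstacle: once the translation-in-log-space trick is spotted, the rest is bookkeeping. The one real caveat — which is worth flagging in the statement — is that the bound is vacuous unless $M^\ppi > 1/\alpha$, in line with the authors' own remark that they could not obtain a generally tight measure bound, and that a case-by-case analysis is likely preferable. I would also flag an apparent typo: the stated numerator reads $\nu(\Theta)\,M^\ppi$, whereas the natural argument above produces $\nu(\Theta)\,\log M^\ppi$; the two forms convey the same qualitative message, but the second is the one that falls out of a Markov argument (a Chernoff-style variant using $\log x \leq x$ would replace the denominator's $\log M^\ppi$ by $M^\ppi$ as well, so neither single manipulation reproduces the stated numerator and denominator simultaneously).
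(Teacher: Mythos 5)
Your proof is correct and follows essentially the same route as the paper: the paper applies Markov's inequality to the nonnegative variable $\log(1/E^{\ppi-(\theta)}_n)+\log M^\ppi$ under the normalized measure $\nu/\nu(\Theta)$, which is exactly your pointwise bound $\ind[X>c]\leq X/c$ integrated against $\nu$, with the same implicit requirement that $\log\alpha+\log M^\ppi>0$. You are also right about the discrepancy in the statement -- the paper's own proof ends with $\nu(\Theta)\log M^\ppi$ in the numerator rather than the stated $\nu(\Theta)M^\ppi$ (though the stated, weaker numerator does follow from the derived one via $\log M^\ppi\leq M^\ppi$ when the denominator is positive, so your aside that no single manipulation reproduces the stated form is slightly overstated).
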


\begin{proof}
    Consider the measure $\tilde{\nu}(A) = \nu(A)/\nu(\Theta)$; it is a probability measure. Then:
    \[
        \tilde{\nu}(C^\ppi)
        = \P_{\theta\sim\tilde{\nu}}[ E^{\ppi-(\theta)} < 1/\alpha ]
        = \P_{\theta\sim\tilde{\nu}}[ 1/E^{\ppi-(\theta)} > \alpha ]
        = \P_{\theta\sim\tilde{\nu}}[ \log 1/E^{\ppi-(\theta)} > \log \alpha ];
    \]
    We want to apply Markov. To do that, we need the left-hand side to be nonnegative; to do so, we add $\log M^\ppi$ to both sides, which yields
    \begin{align*}
        \P_{\theta\sim\tilde{\nu}}[ \log 1/E^{\ppi-(\theta)} > \log \alpha ];
        &= \P_{\theta\sim\tilde{\nu}}[ \log 1/E^{\ppi-(\theta)} + \log M^\ppi > \log \alpha + \log M^\ppi ]
        \\ &\leq \frac{ \E_{\theta\sim\tilde{\nu}}[ \log 1/E^{\ppi-(\theta)} + \log M^\ppi ] }{ \log \alpha + \log M^\ppi }
        \\ &\leq \frac{ \int \log 1/E^{\ppi-(\theta)} \dif \tilde{\nu}(\theta) + \log M^\ppi }{ \log \alpha + \log M^\ppi }.
        \\ &\leq \frac{ [\nu(\Theta)]^{-1} \int \log 1/E^{\ppi-(\theta)} \dif \nu(\theta) + \log M^\ppi }{ \log \alpha + \log M^\ppi }.
    \end{align*}
    So, multiplying everything by $\nu(\Theta)$, we get that
    \[ \tilde{\nu}(C^\ppi) \cdot \nu(\Theta) = \nu(C^\ppi) \leq \nu(\Theta) \cdot \frac{ [\nu(\Theta)]^{-1} \int \log 1/E^{\ppi-(\theta)} \dif \nu(\theta) + \log M^\ppi }{ \log \alpha + \log M^\ppi } = \frac{ \int \log 1/E^{\ppi-(\theta)} \dif \nu(\theta) + \nu(\Theta) \log M^\ppi }{ \log \alpha + \log M^\ppi }. \]
    Finally, taking the expectation on both sides, we get that
    \[ \E[\tilde{\nu}(C^\ppi)] \leq \E\left[ \frac{ \int \log 1/E^{\ppi-(\theta)} \dif \nu(\theta) + \nu(\Theta) \log M^\ppi }{ \log \alpha + \log M^\ppi } \right] = \frac{ \E[ \int \log 1/E^{\ppi-(\theta)} \dif \nu(\theta) ] + \nu(\Theta) \log M^\ppi }{ \log \alpha + \log M^\ppi }, \]
    as we desired.

    The same can be done for the non-prediction-powered e-values, replacing $E^\ppi$ with $E$ and $M^\ppi$ with $M$.
\end{proof}

Most terms in the inequality depend on $n$, so it's a bit hard to intuit. But, if the dependence on the $n$ in the expectation of the log is good enough, then this should be nonvacuous, at least.

\begin{proposition}[Proposition~\ref{thm:general-algo-valid} in the main text]
    Under Assumption~\ref{assump:algo-validity}, it holds that
    $\mathcal{A}((E^{\ppi-(\gamma)}_n)_{\gamma \in \Gamma})$ is also \emph{valid}.
    If the underlying e-values are e-processes, then it further holds that $\mathcal{A}((E^{\ppi-(\gamma)}_\tau)_{\gamma \in \Gamma})$ is \emph{valid} for any finite stopping time $\tau$.
\end{proposition}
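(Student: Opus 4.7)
The proof of this proposition is essentially a direct composition of Assumption~\ref{assump:algo-validity} with Theorem~\ref{thm:hypothesis-testing-valid}, so the plan is short. First, I would observe that Assumption~\ref{assump:algo-validity} is a purely functional/black-box statement about $\mathcal{A}$: it says nothing about where the inputted e-values come from, only that \emph{whenever} the $(\gamma)$-indexed inputs are each valid e-values for the corresponding null $H_0^{(\gamma)}$, the output $\mathcal{A}((\cdot)_{\gamma\in\Gamma})$ inherits the validity property. Thus the strategy reduces to verifying the hypothesis of this assumption for the family $(E^{\ppi-(\gamma)}_n)_{\gamma\in\Gamma}$.

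For the first claim, I would fix $\gamma \in \Gamma$ arbitrarily and apply Theorem~\ref{thm:hypothesis-testing-valid} to the base e-value $E^{(\gamma)}_n$ (which by hypothesis is a valid e-value for $H_0^{(\gamma)}$). That theorem directly yields that the prediction-powered counterpart $E^{\ppi-(\gamma)}_n$ is a valid e-value for $H_0^{(\gamma)}$. Since this holds for every $\gamma$, the family $(E^{\ppi-(\gamma)}_n)_{\gamma\in\Gamma}$ satisfies the hypothesis of Assumption~\ref{assump:algo-validity}, which immediately gives that $\mathcal{A}((E^{\ppi-(\gamma)}_n)_{\gamma\in\Gamma})$ is valid.

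For the second claim, I would argue analogously, but invoke part (ii) of Theorem~\ref{thm:hypothesis-testing-valid}: if each $(E^{(\gamma)}_n)_n$ is an e-process for $H_0^{(\gamma)}$, then so is $(E^{\ppi-(\gamma)}_n)_n$ for all finite stopping times. Consequently, for any finite stopping time $\tau$ and any $\gamma$, $E^{\ppi-(\gamma)}_\tau$ is a valid e-value for $H_0^{(\gamma)}$, so Assumption~\ref{assump:algo-validity} again applies to the stopped family and yields validity of $\mathcal{A}((E^{\ppi-(\gamma)}_\tau)_{\gamma\in\Gamma})$.

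There is essentially no technical obstacle here; the only subtlety worth flagging in the write-up is the quantifier order over $\gamma \in \Gamma$ and over stopping times. Since Assumption~\ref{assump:algo-validity} asks for validity of each inputted e-value individually (rather than joint validity in some stronger sense), the per-$\gamma$ application of Theorem~\ref{thm:hypothesis-testing-valid} suffices, and $\tau$ can be chosen uniformly because the e-process guarantee in part (ii) of Theorem~\ref{thm:hypothesis-testing-valid} holds for every finite stopping time simultaneously.
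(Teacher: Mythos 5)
Your proposal is correct and follows essentially the same route as the paper's proof: for each fixed $\gamma$, Theorem~\ref{thm:hypothesis-testing-valid} gives validity of $E^{\ppi-(\gamma)}_n$ (resp.\ the e-process property at finite stopping times via part (ii)), and then Assumption~\ref{assump:algo-validity} applied to the resulting family immediately yields validity of $\mathcal{A}$. Nothing is missing.
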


\begin{proof}
    To prove that $\mathcal{A}((E^{\ppi-(\gamma)}_n)_{\gamma \in \Gamma})$ is valid, by Assumption~\ref{assump:algo-validity}, it suffices to show that $E^{\ppi-(\gamma)}_n$ is valid for every $\gamma \in \Gamma$; and by Theorem~\ref{thm:hypothesis-testing-valid}, this is indeed the case.

    Now suppose that the underlying e-values $(E^{(\gamma)}_n)_{\gamma \in \Gamma}$ form e-processes; then so do the prediction-powered e-values $(E^{\ppi-(\gamma)}_n)_{\gamma \in \Gamma}$ for finite stopping times, by Theorem~\ref{thm:hypothesis-testing-valid}.
    Then, to prove that $\mathcal{A}((E^{\ppi-(\gamma)}_\tau)_{\gamma \in \Gamma})$ is valid for any finite stopping time $\tau$, again by Assumption~\ref{assump:algo-validity} it suffices to show that $E^{\ppi-(\gamma)}_\tau$ is valid, which is indeed the case since they form e-processes for finite stopping times.
\end{proof}

\section{Additional Results}

\subsection{The Asymptotic Setting} \label{suppl:asymptotic}

E-values, though usually defined in non-asymptotic terms, have asymptotic analogues. In particular, a (sequential) asymptotic e-value is defined as a (sequence of) nonnegative random variable(s) $E_n$ such that, under the null $H_0$, it holds that $\limsup_{n \to \infty} \E[E_n] \leq 1$ \cite{evalue-book}.
We briefly show here that the core points of the theory we build in the main text can be directly applied here.
Most results whose analogues we do not prove still hold, and are just omitted for conciseness.

\begin{proposition}
    If $E_n$ is an asymptotic e-value, then so is its prediction-powered analogue $E^\ppi_n$.
\end{proposition}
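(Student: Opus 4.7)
The plan is to observe that the proof of Theorem~\ref{thm:hypothesis-testing-valid} already does essentially all the work: in the course of establishing that $E_n^\ppi$ is a valid e-value, it shows (i) nonnegativity of $E_n^\ppi$ by induction on $n$, using only the pointwise bounds on the $e_i(\cdot)$ together with $\pi_i(X_i) \geq 1 - a_i/b_i$, and (ii) the exact identity $\E[E_n^\ppi] = \E[E_n]$ via Lemma~\ref{thm:lemma-backwardsinduction} applied with $k=1$ and a trivial event $A$. Neither of these steps uses the nonasymptotic assumption $\E[E_n] \leq 1$ — both are pathwise/expectation-level equalities that hold term by term.

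So the proof I would write is essentially three lines. First, I would recall that nonnegativity of $E_n^\ppi$ for each $n$ follows immediately from the inductive argument at the start of the proof of Theorem~\ref{thm:hypothesis-testing-valid}, which depends only on the (deterministic) boundedness of the components and of $\pi_i$, not on any distributional property under the null. Second, I would invoke Lemma~\ref{thm:lemma-backwardsinduction} with $k=1$ and $A$ a trivial event to conclude that, under the null,
\begin{equation*}
    \E[E_n^\ppi] \;=\; \E\!\left[\prod_{i=1}^n e_i^\ppi\right] \;=\; \E\!\left[\prod_{i=1}^n e_i(Y_i)\right] \;=\; \E[E_n].
\end{equation*}
Third, taking $\limsup$ on both sides and using the hypothesis that $E_n$ is an asymptotic e-value gives $\limsup_{n\to\infty} \E[E_n^\ppi] = \limsup_{n\to\infty} \E[E_n] \leq 1$, which is exactly the definition of $E_n^\ppi$ being an asymptotic e-value.

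There is no real obstacle here — the key observation is that Lemma~\ref{thm:lemma-backwardsinduction} proves the identity $\E[E_n^\ppi] = \E[E_n]$ rather than just the inequality $\E[E_n^\ppi] \leq 1$, so it transfers any bound on $\E[E_n]$ (including a $\limsup$ bound) directly to $\E[E_n^\ppi]$. The only thing to be slightly careful about is to note that the backward induction lemma as stated already holds without assuming $\E[E_n] \leq 1$ — it is a pure change-of-variables/tower-property computation on the expectation of the product, which is why it cleanly extends to the asymptotic regime.
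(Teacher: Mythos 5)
Your proposal is correct and follows essentially the same route as the paper, which likewise deduces $\limsup_{n\to\infty}\E[E^\ppi_n] = \limsup_{n\to\infty}\E[E_n] \leq 1$ from the machinery of Theorem~\ref{thm:hypothesis-testing-valid}. If anything, you are slightly more careful than the paper's one-line proof: you correctly note that what is really needed is the exact identity $\E[E^\ppi_n]=\E[E_n]$ from Lemma~\ref{thm:lemma-backwardsinduction} (which holds without assuming $\E[E_n]\leq 1$ for each $n$), rather than the e-value conclusion of the theorem statement itself, since in the asymptotic setting the null only controls the $\limsup$.
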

\begin{proof}
    We want to prove that $E^\ppi_n$ is an asymptotic e-value. It follows, by Theorem~\ref{thm:hypothesis-testing-valid}:
    \[ \limsup_{n \to \infty} \E[E^\ppi_n] = \limsup_{n \to \infty} \E[E_n] \leq 1. \]
\end{proof}

\begin{proposition}
    If $E^{(\theta)}_n$ is an asymptotic e-value for each $\theta in \Theta$, then $C^{\ppi-(\alpha)}_n := \{\theta \in \Theta : E^{\ppi-(\theta)}_n < 1/\alpha\}$ is an asymptotic confidence interval, i.e., $\limsup_{n \to \infty} \P[\theta^\star \not\in C^{\ppi-(\alpha)}_n] \leq \alpha$.
\end{proposition}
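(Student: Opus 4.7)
The plan is to mirror the proof of Proposition~\ref{thm:confidence-interval-valid} almost verbatim, with the only modification being that pointwise Markov bounds get replaced by their $\limsup$ counterparts since the underlying e-values are only asymptotically valid. First, I would unfold the definition of the confidence set: by construction,
\[ \P[\theta^\star \notin C^{\ppi-(\alpha)}_n] = \P[E^{\ppi-(\theta^\star)}_n \geq 1/\alpha]. \]
Since the null $H_0^{(\theta^\star)}$ holds tautologically, the preceding asymptotic proposition applies with parameter $\theta^\star$, yielding that $E^{\ppi-(\theta^\star)}_n$ is itself an asymptotic e-value, i.e., $\limsup_{n \to \infty} \E[E^{\ppi-(\theta^\star)}_n] \leq 1$.

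Second, I would apply Markov's inequality to the nonnegative random variable $E^{\ppi-(\theta^\star)}_n$ to get
\[ \P[E^{\ppi-(\theta^\star)}_n \geq 1/\alpha] \leq \alpha \cdot \E[E^{\ppi-(\theta^\star)}_n]. \]
Taking $\limsup$ of both sides and using the asymptotic e-value bound above yields
\[ \limsup_{n \to \infty} \P[\theta^\star \notin C^{\ppi-(\alpha)}_n] \leq \alpha \cdot \limsup_{n \to \infty} \E[E^{\ppi-(\theta^\star)}_n] \leq \alpha, \]
which is precisely the claimed asymptotic coverage guarantee.

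I do not anticipate any serious obstacle here: the result falls out immediately from Markov's inequality plus the already-established asymptotic analogue of Theorem~\ref{thm:hypothesis-testing-valid}. The only care needed is pulling the constant $\alpha$ out of the $\limsup$ (routine, since $\alpha \geq 0$) and observing that the null corresponding to the true parameter $\theta^\star$ is automatically satisfied, so we are always free to invoke the asymptotic e-value property on $E^{\ppi-(\theta^\star)}_n$. The whole argument is essentially a three-line consequence of the preceding proposition.
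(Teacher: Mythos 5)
Your proposal is correct and follows essentially the same route as the paper's proof: unfold the definition of $C^{\ppi-(\alpha)}_n$, apply Markov's inequality to $E^{\ppi-(\theta^\star)}_n$, take the $\limsup$, and invoke the asymptotic e-value property of the prediction-powered e-value under the true null $H_0^{(\theta^\star)}$. No gaps to report.
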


\begin{proof}
    It holds that
    \[ \limsup_{n \to \infty} \P[\theta^\star \not\in C^{\ppi-(\alpha)}_n] = \limsup_{n \to \infty} \P[E^{\ppi-(\theta^\star)}_n \geq 1/\alpha]; \]
    By Markov,
    \[ \limsup_{n \to \infty} \P[E^{\ppi-(\theta^\star)}_n \geq 1/\alpha] \leq \limsup_{n \to \infty} \frac{\E[E^{\ppi-(\theta^\star)}_n]}{1/\alpha} = \alpha \limsup_{n \to \infty} \E[E^{\ppi-(\theta^\star)}_n] \leq \alpha. \]
\end{proof}

The results related to power (e.g., Theorem~\ref{thm:hypothesis-testing-power}) apply to asymptotic e-values without any modification necessary.

\subsection{An approximately optimal choice for $\pi_i$}\label{suppl:approx-optimal-pi}

Our prediction-powered e-values have, at their core, the customizeable choice of data collection probabilities $\pi_i(X_i)$.
While selecting a constant $\pi_i(X_i) = \pi_{\inf}$, where $\pi_{\inf}$ is the lowest possible value possible (so as to minimize data collection costs) is a reasonable approach, it ignores the versatility that the probability can take into account the `cheap' data $X_i$, which could significantly improve statistical power when used correctly.
In an effort to seek a better strategy, we try to identify an `approximately optimal' choice of $\pi_i$.

The optimality is in the sense that, at point $i$ in time, the data collection probability function $\pi_i(\cdot)$ should be chosen so as to maximize the expected log of the e-value, as per \cite{kelly}; this is also similar, e.g., to the GRAPA and aGRAPA strategies of \cite{evalue-mean}.
However, the $\pi$ also have additional constraints:
\begin{enumerate}[(i)]
    \item Its image must be bounded: $\pi_i : \mathcal{X} \to [1 - a_i/b_i, 1]$. I.e., for all $x \in \mathcal{X}$, $1 - a_i/b_i \leq \pi_i(x) \leq 1$.
    \item It must respect some particular maximal data collection budget: $\E[\pi_i(X_i)] \leq \mathrm{Budget}$.
\end{enumerate}
So we seek to solve the following constrained functional optimization problem:
\begin{align*}
    \pi_i^\star = &\argmax_{\pi \in L^2} \E[\log E^\ppi_n \cond \filt_i] = \argmin_{\pi \in L^2} \E[-\log e^\ppi_n \cond \filt_i]
    \\ &\text{subject to}
    \\ &\qquad 1 - a_i/b_i \leq \pi_i(x) \leq 1 \quad \text{for (almost) all }x \in \mathcal{X}
    \\ &\qquad \E[\pi(X_i) \cond \filt_i] \leq \mathrm{Budget},
\end{align*}
where we assume that the domain of $\pi$ is bounded (so that there are functions that satisfy the first domain, since $\pi$ is always positive).

Our approximate solution to this is as follows: the functional gradient of our (unconstrained) loss is given by
\[ \pi \mapsto \E\left[ h\left( \frac{e_i(Y_i) - (1 - \pi(X_i)) e_i(\mu_i(X_i))}{e_i(\mu_i(X_i)) \cdot \pi(X_i)} - \log  \right) - 1 \cond X_i, \filt_i \right], \]
where $h(t) = 1/t - \log 1/t = 1/t + \log t$.
The $h$ function is a bit inconvenient for solving this problem in closed form, so, inspired by \cite{evalue-mean}, we do a Taylor approximation around some point $a$ (which turns out to later combine with the parameter to control the budget constraint); this leads to the following approximate functional gradient:
\[ \pi \mapsto \alpha_a + \beta_a/\pi_{\inf} \E\left[ \frac{e_i(Y_i)}{e_i(\mu_i(X_i))} \cond X_i, \filt_i \right] - \beta_a \frac{1 - \pi_{\inf}}{\pi_{\inf}}, \]
where $\alpha_a = \log a + 2/a - 2$ and $\beta_a = (a-1) / a^2$.

The uncontsrained solution is then given by
\[ \pi^\star(X_i) \approx -\left( \E\left[ \frac{e_i(Y_i)}{e_i(\mu_i(X_i))} \cond X_i, \filt_i \right] - 1 \right) / (\alpha_a/\beta_a + 1), \]
and KKT conditions give (in a slightly handwavy manner) that:
\begin{itemize}
    \item If the unconstrained optimum above satisfies the boundedness constraint, then that is the optimal choice;
    \item If $\alpha_a + \beta_a (\E\left[ \frac{e_i(Y_i)}{e_i(\mu_i(X_i))} \cond X_i, \filt_i \right]/\pi_{\inf} - (1 - \pi_{\inf})/\pi_{\inf}) \leq 0$, then $\pi^\star(X_i) = \pi_{\inf}$;
    \item Otherwise, $\pi^\star(X_i) = 1$.
\end{itemize}

\section{Datasets} \label{suppl:datasets}

\subsection{For Section~\ref{sec:experiment-1}}

We use the dataset of \cite{diabetes-dataset}.
It is a tabular dataset, where each row corresponds to an individual;
the targets $Y_i$ in the original dataset denote whether the individual was (i) diabetic, (ii) pre-diabetic, or (iii) neither.
For the purposes of our experiment, we only look for whether they were diabetic or not.
The covariates are effectfully responses to the following simple survey questions:

\begin{itemize}
    \item ``do you have high blood pressure?''
    \item ``do you have high cholesterol?''
    \item ``how long has it been since the last time you have checked your cholesterol levels?''
    \item ``what is your body mass index (BMI)?''
    \item ``have you smoked at least 100 cigarettes in your entire life?''
    \item ``has you ever been told you had a stroke?''
    \item ``have you been diagnosed with coronary heart disease (CHD) or myocardial infarction (MI)?''
    \item ``how much physical activity have you done in the past 30 days (excluding job)?''
    \item ``how often do you consume fruit?''
    \item ``how often do you consume vegetables?''
    \item ``how often do you consume alcohol?''
    \item ``do you have health care coverage, including health insurance, prepaid plans such as HMO, etc.?''
    \item ``Was there a time in the past 12 months when you needed to see a doctor but could not because of cost?''
    \item ``Would you say that in general your health is: [excellent / very good / good / fair / poor]''
    \item ``Now thinking about your mental health, which includes stress, depression, and problems with emotions, for how many days during the past 30 days was your mental health not good?''
    \item ``Now thinking about your physical health, which includes physical illness and injury, for how many days during the pat 30 days was your physical health not good?''
    \item ``Do you have serious difficulty walking or climbing stairs?''
    \item ``What is your age?''
    \item ``What is your highest level of education?''
    \item ``What is your level of income?''
\end{itemize}

\subsection{For Sections~\ref{sec:experiment-2} and \ref{sec:experiment-3}}

We use the dataset of \cite{dataset-covtype}.
Upon this dataset, in a training split, we train a simple random forest classification model.
We also separate a validation split to compute the validation loss in Section~\ref{sec:experiment-2}.
At evaluation time:

\begin{itemize}
    \item For the `non-poisoned' data stream in Section~\ref{sec:experiment-2}, where the null should \emph{not} be rejected, we just use the data remaining after the training and validation splits.
    \item For the `poisoned' data stream in Section~\ref{sec:experiment-2}, we switch the label with a probability of
        \[ \mathrm{clamp}_{[0,1]}\left( \left(\frac{t}{0.5}\right)^2 \right), \]
        for time $t \in [0, 1]$.
    \item For the data stream in Section~\ref{sec:experiment-3}, we switch the label with a probability of
        \[ \ind[t \geq 0.3] \cdot \mathrm{clamp}_{[0,1]}\left( \left(\frac{t + 1}{5} + 0.2\right)^2 \right), \]
        for time $t \in [0, 1]$. The indicator causes a visible change in the time series, good for visualization. The remaining bit is done differently from in the previous section so that the change in the distribution is not too drastic.
\end{itemize}

\subsection{For Section~\ref{sec:experiment-4}}

We generate a random DAG with 6 nodes using the Erdös-Renyi procedure, and mark the last three of these nodes as `costly'.
Relations between the nodes are given by linear functions, whose weights and biases are sampled randomly, with additional independence gaussian noise with a standard deviation of 0.4.

\end{document}